\documentclass[12pt, draftclsnofoot, onecolumn]{IEEEtran}
\usepackage{graphicx}
\usepackage{epsfig,subfigure}
\usepackage{amssymb,amsmath}
\usepackage{enumerate}
\usepackage{mathbbold}
\usepackage{cite,url,subfigure,epsfig,graphicx,wrapfig}
\usepackage{latexsym}
\usepackage{multirow}
\usepackage{array}
\usepackage{cite}
\usepackage{color, soul}
\usepackage[usenames, dvipsnames]{xcolor}
\usepackage{amsthm}
\usepackage{mathrsfs}
\usepackage{amsmath}

\newtheorem{assumption}{\textbf{Assumption}}
\newtheorem{lemma}{\textbf{Lemma}}
\newtheorem{theorem}{\textbf{Theorem}}
\newtheorem{corollary}{\textbf{Corollary}}

\usepackage{booktabs,arydshln}
\usepackage[noend]{algpseudocode}
\usepackage{algorithmicx,algorithm}

\begin{document}

\title{
Energy and Spectrum Efficient Federated Learning via High-Precision Over-the-Air Computation 
\author{
Liang Li, \emph{Member}, \emph{IEEE}, Chenpei Huang, \emph{Student Member}, \emph{IEEE}, Dian Shi, \emph{Member}, \emph{IEEE}, Hao Wang, \emph{Member}, \emph{IEEE}, Xiangwei Zhou, \emph{Senior Member}, \emph{IEEE}, Minglei Shu, \emph{Member}, \emph{IEEE}, and Miao Pan, \emph{Senior Member}, \emph{IEEE}
\thanks{L. Li is with the School of Computer Science (National Pilot Software Engineering School), Beijing University of Posts and Telecommunications, Beijing, 100876, China (e-mail: liliang1127@bupt.edu.cn).}
\thanks{C. Huang, D. Shi and M. Pan are with the Electrical and Computer
Engineering Department, University of Houston, TX, 77004, USA (e-mail: chuang25@uh.edu, dshi3@uh.edu, mpan2@uh.edu).}
\thanks{H. Wang is with the Division of Computer Science and Engineering, Louisiana State University, Baton Rouge, LA, 70803, USA (e-mail: haowang@lsu.edu).}
\thanks{X. Zhou is with the Division of Electrical and Computer Engineering, Louisiana State University, Baton Rouge, LA, 70803, USA (e-mail: xwzhou@lsu.edu).}
\thanks{M. Shu is with the Shandong Artificial Intelligence Institute, Qilu University of Technology (Shandong Academy of Sciences), Jinan, 250353, China (e-mail: shuml@sdas.org).}
}
}

\maketitle

\begin{abstract}
Federated learning (FL) enables mobile devices to collaboratively learn a shared prediction model while keeping data locally. However, there are two major research challenges to practically deploy FL over mobile devices: (i) frequent wireless updates of huge size gradients v.s. limited spectrum resources, and (ii) energy-hungry FL communication and local computing during training v.s. battery-constrained mobile devices. To address those challenges, in this paper, we propose a novel multi-bit over-the-air computation (M-AirComp) approach for spectrum-efficient aggregation of local model updates in FL and further present an energy-efficient FL design for mobile devices. Specifically, a high-precision digital modulation scheme is designed and incorporated in the M-AirComp, allowing mobile devices to upload model updates at the selected positions simultaneously in the multi-access channel. Moreover, we theoretically analyze the convergence property of our FL algorithm. Guided by FL convergence analysis, we formulate a joint transmission probability and local computing control optimization, aiming to minimize the overall energy consumption (i.e., iterative local computing + multi-round communications) of mobile devices in FL. Extensive simulation results show that our proposed scheme outperforms existing ones in terms of spectrum utilization, energy efficiency, and learning accuracy. 
\end{abstract}

\begin{IEEEkeywords}
Federated Learning, Multi-Bit Over-the-Air Computation, Gradient Quantization, Energy Efficiency.
\end{IEEEkeywords}

\section{Introduction}
With the development of mobile communications and Internet-of-Things (IoT) technologies, mobile devices with built-in sensors and Internet connectivity have proliferated  huge volumes of data at the network edge. These data can be collected and analyzed to build increasingly complex machine learning models. To avoid raw-data sharing among the untrustworthy parties and leverage the ever-increasing computation capability of mobile devices, the emerging federated learning (FL) framework allows participating mobile devices to collaboratively train a machine learning model under the orchestration of a centralized server by just exchanging the local model updates with others via wireless communications. With such desirable properties, FL over mobile devices has inspired a wide utilization in a large variety of intelligent services, such as the keyword prediction~\cite{mcmahan2017communication}, voice classifier~\cite{Siri}, and e-health~\cite{brisimi2018federated}, etc.

Although only model updates instead of raw data are transferred between mobile devices and the FL server, such updates could contain hundreds of millions of parameters with complex neural networks. That makes the uplink transmissions from mobile devices to the FL server for model aggregation particularly challenging, resulting in a huge burden on both wireless networks and mobile devices. On the one hand, the spectrum resource that can be allocated to each device decreases proportionally as the number of devices increases, which hampers the scalability of FL to accommodate a large number of mobile devices with limited spectrum resources. On the other hand, transmitting a large volume of model updates periodically and executing heavy local on-device computations can quickly drain out the energy of battery-powered mobile devices. Such a mismatch restricts mobile devices or makes them reluctant to participate in FL.

Over-the-air computation (AirComp) provides a promising solution to address the aforementioned spectrum challenge by achieving scalable and efficient model update aggregation in FL. Unlike the conventional orthogonal multiple access techniques, where each user is restricted to its allocated spectrum band~\cite{pan2011infocom}, AirComp allows all the users to utilize the whole spectrum for transmissions simultaneously. By applying AirComp to FL, all the participating devices can transmit their model updates on the same channel. Due to the fact that mutli-access channel (MAC) inherently yields an additive superposed signal, the signals of all the participating devices are aligned to obtain desired arithmetic computation results directly over the air, thus significantly improving the spectrum efficiency. However, most works in the literature employ the analogy modulation to design their over-the-air FL schemes, which is not compatible with commercial off-the-shelf digital mobile devices and thus hinders their deployment in current/future communication systems, such as LTE, 5G, Wi-Fi 6, and 6G, etc. Besides, most existing efforts focus on single-iteration transmission design for AirComp-based FL~\cite{yang2020twcflota,fan2021joint}, and the impacts of AirComp on overall FL training performance, especially the FL convergence, are rarely discussed.

Therefore, in this paper, we design a novel multi-bit Aircomp (M-AirComp) FL scheme, named ESOAFL, which is compatible with the most common Quadrature Amplitude Modulation (QAM) to transmit the model updates, so that we do not need to modify the modulation protocols manufactured within commercial off-the-shelf mobile devices. Specifically, gradient quantization is incorporated into the ESOAFL scheme to facilitate the digital modulation, and only part of the gradients are selected to transmit to cope with the channel fading. In addition to handling the spectrum issue in FL over wireless networks, our scheme is battery-friendly to the participating mobile devices. Here, the energy consumption is considered from the long-term learning perspective where local computing (i.e., ``working'') and wireless communication (i.e., ``talking'') are two main focuses. Our M-AirComp FL scheme only requires updated gradients with good channel conditions to transmit, which further saves the communication energy compared with other AirComp schemes. Moreover, we theoretically analyze the convergence property of our ESOAFL approach, based on which we quantify the number of communication rounds needed for achieving the convergence, and the overall long-term energy consumption is further modeled. Finally, we develop a joint transmission probability and local computing control approach to balance ``working'' and ``talking,'' thus minimizing overall energy consumption. Our salient contributions are summarized as follows.

\begin{itemize}
    \item We propose an energy and spectrum efficient M-AirComp FL (ESOAFL) scheme where updated gradients are quantized into high-precision bitstreams, adapting to the digital modulation settings. To facilitate the M-AirComp, the transmission power of devices is carefully controlled to select the updates with good channel conditions for FL model aggregation.
    
    \item We theoretically analyze the convergence of ESOAFL to characterize the impacts of the M-AirComp on FL. Guided by it, the gradients transmission probability and local computing iterations are jointly optimized from the long-term learning perspective, aiming to achieve energy-efficient federated training on mobile devices over spectrum-constrained wireless networks.
     
    
    \item We conduct extensive simulations to verify the superiority of the ESOAFL compared to several baselines, under varying learning models, training datasets, and network settings. It shows that our ESOAFL scheme can improve spectral efficiency dozens of times and save at least half of the energy consumption.
    
\end{itemize}

The rest of this paper is organized as follows. Section II provides some preliminaries of AirComp and FL. Section III describes our M-AirComp design and the corresponding ESOAFL approach. The convergence analysis of the proposed ESOAFL approach is derived in Section IV, and the formulation and solution of the energy efficient control scheme are also presented. Numerical simulations are provided in Section V, and VI reviews related works of the AirComp FL. Section VII finally concludes the paper and provides future work.

\section{Preliminaries of FL and AirComp FL}

\subsection{Preliminaries of FL}
We consider a federated learning system consisting of $K$ participating users carrying mobile devices in each FL round, where user $k \in \{1,2,\dots,K\}$ has its own data set, denoted by $\mathcal{D}_k$. The goal of FL is to collaborate the users to perform a unified optimization task, formally written as:
\begin{align}
    \min _{\mathbf{w} \in \mathbb{R}^{d}} f(\mathbf{w}) \triangleq \frac{1}{K} \sum_{k=1}^{K} f_{k}(\mathbf{w}),
\end{align}
where $f_{k}$ is the local loss function corresponding to user $k$, and $d$ is the dimension of the model parameters. 

Let $r\in\{1,2,\dots,R\}$ be the index of FL global communication round, and $H$ be the number of local computing iterations executed between every two consecutive global communication rounds. Moreover, we define $\mathbf{w}^r$ as the global model at the $r$-th communication round and define $\mathbf{w}^{r,h}_k$ as the local model of user $k$ at the $h$-th local iteration in the $r$-th communication round. Then the local updating process of user $k$ in the $r$-th communication round is given by:
\begin{align}
    \mathbf{w}^{r,h+1}_k = \mathbf{w}^{r,h}_k - \eta \nabla F_k(\mathbf{w}^{r,h}_k) \ \text{for} \ h=0,1,..., H-1, 
\end{align}
where $\nabla F_k(\mathbf{w}^{r,h}_k)$ is a stochastic gradient of function $f(\cdot)$ with a random batch-size data, and $\eta$ is the local learning rate. Here, $\nabla F_k(\mathbf{w}^{r,h}_k)$ is an unbiased estimation of $\nabla f_k(\mathbf{w}^{r,h}_k)$, i.e., $\mathbb{E}_{\xi \sim \mathcal{D}_{k}}\left[\nabla F_k(\mathbf{w}) \mid \boldsymbol{\xi}\right]= \nabla f_{k}(\mathbf{w})$, where $\xi$ represents the randomness like the batch-size index. After finishing the local training, every participating user uploads its local model updates to the server for global aggregation, i.e., $\eta\sum_{h=0}^{H-1} \nabla F_k(\mathbf{w}^{r,h}_k)$, and the server then broadcasts the most recent global model to initiate a new round of local training. The above process is repeated until the global model converges.


\subsection{Preliminaries of AirComp FL}

During the FL training process, all the users have to transmit their local updates to the server for global aggregation, which may result in severe transmission congestion and consume a lot of communication resources, especially in cases with massive participating users. As one of the advanced wireless techniques, over-the-air computation (AirComp) enables all users to simultaneously transmit the local gradients over the same wireless medium without spectrum allocation and naturally aggregates the local updates during the signal propagation, which exhibits significant potentials to improve the spectrum utilization. 

Let $\mathcal{X}:=\{x_1,x_2,...,x_K\}$ and $\tilde{y}$ denote the input set and the output objective of the system, respectively. Here, $x_k$ is the gradients to be transmitted by user $k$, and $\tilde{y}$ is the global aggregation result received at the server with AirComp. Generally, an AirComp-based wireless communication system adopts precoding and amplification at transmitters, while receivers often have equalization blocks for signal detection. Therefore, AirComp computes the aggregated objective as 
\begin{align}
\label{eq:AirCompAvg}
    \tilde{y} := \text{Air}\left(\mathcal{X}\right) = \frac{a}{K}\Bigg[\sum_{k=0}^{K} h_{k}p_{k}x_{k} + n\Bigg],
\end{align}
where $h_k\in\mathbb{C}$ is the channel coefficient between user $k$ and the server, which is assumed to follow the Rayleigh distribution, i.e., $h_k\sim CN(0, \sqrt{\lambda})$. $n\sim N(0, \sigma_z^2)$ is the additive white Gaussian noise (AWGN) at the receiver. The Tx-scaling factor $p_k\in\mathbb{C}$, a.k.a. power control policy, compensates the phase shift posed by the channel and amplifies the transmit signal. The goal of the Tx-scaling is to ensure that each participating user contributes equally at the receiving antenna and the superposed signal is proportional to the ideal summation, which is defined as the average operation over the input set without AirComp, i.e., $y:= \frac{1}{K}\sum_{k=1}^{K}x_k$. Accordingly, the Rx-scaling factor $a\in\mathbb{R}$ acts as an equalizer and recovers the sampled analog result to its expected value. 


\subsection{Preliminary Experiments on AirComp FL}\label{motiv_aircompfl}

\begin{figure} \centering 
 \subfigure[Test accuracy vs. epochs\label{diss_comm}]
  {\includegraphics[width=.48\textwidth]{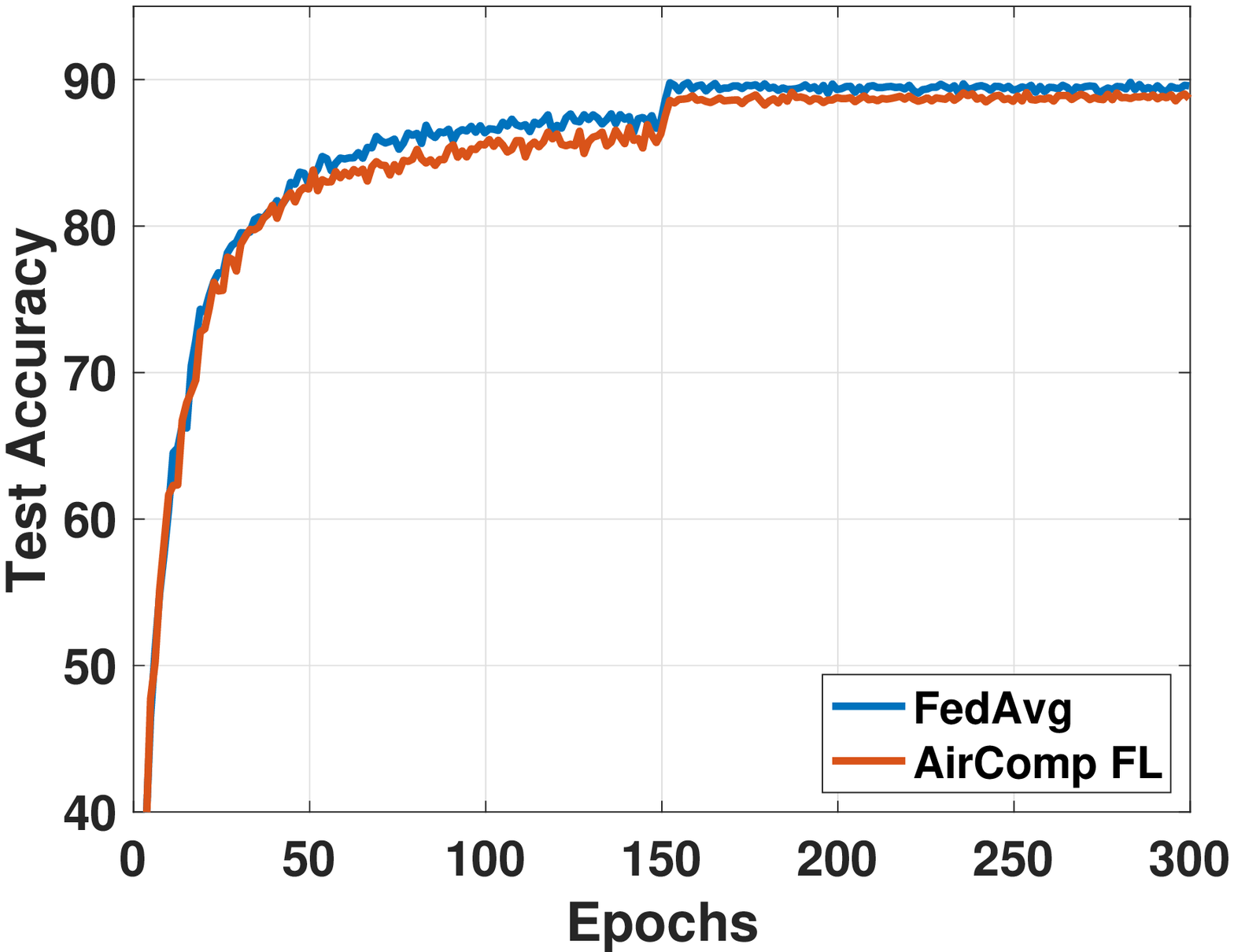}}
  \subfigure[Test accuracy vs. comm. resources\label{diss_latency}]
  {\includegraphics[width=.48\textwidth]{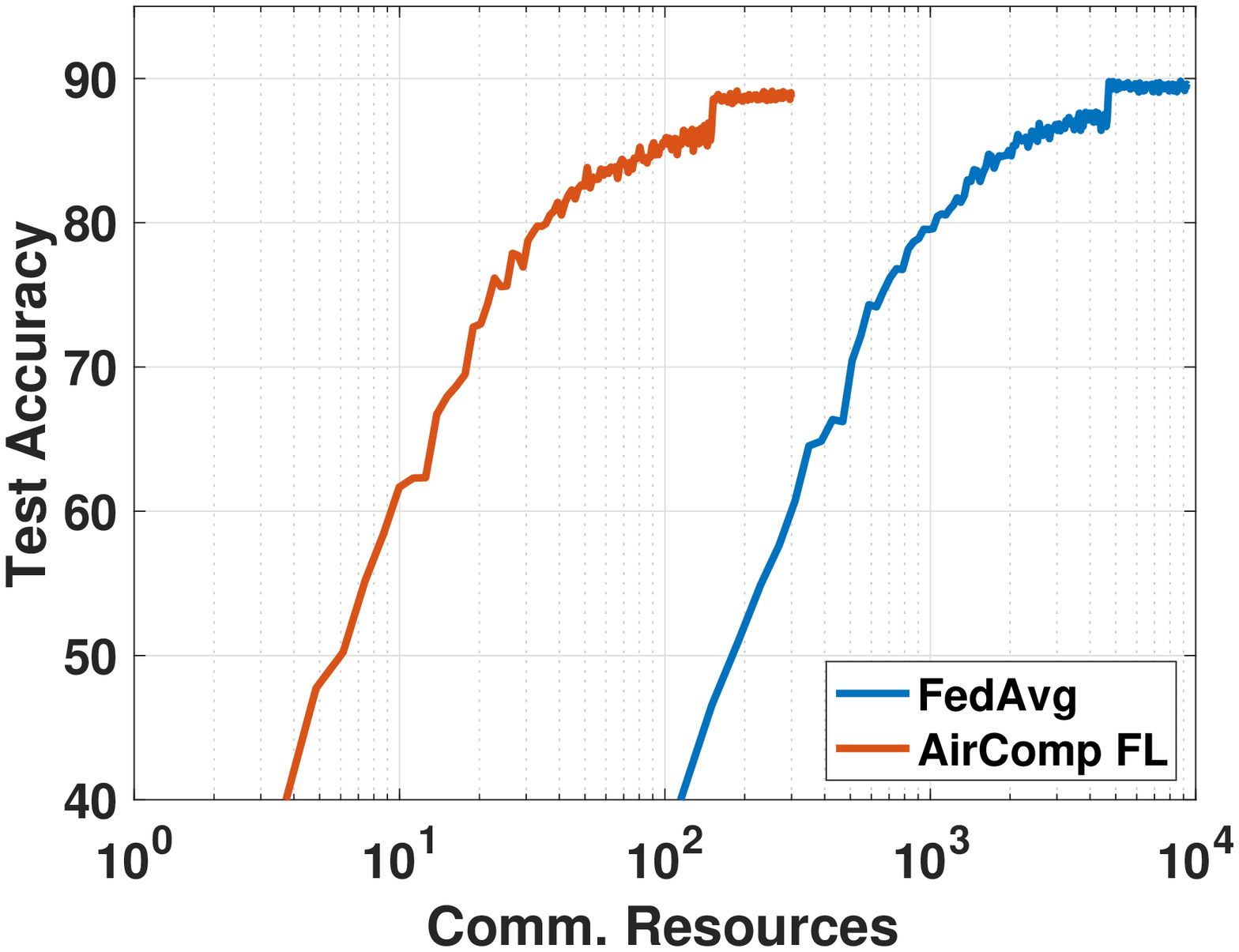}}
  \caption{Over-the-Air federated learning (AirComp FL).} \label{Fig.diss}
\end{figure}
To demonstrate the spectrum-efficient benefit of AirComp FL, we present the preliminary experimental results AirComp FL and the classic FedAvg in Fig.~\ref{Fig.diss}, which correspond to the FL scheme with and without AirComp respectively. Here, $10$ users are considered to participate in an FL task and collaboratively train a ResNet-20 model on the CIFAR-10 dataset. Both the communication bandwidth and the number of training epochs are set to be the same for these two schemes. Taking test accuracy as the measure, Fig.~\ref{diss_comm} depicts the convergence performance of the training process, while Fig.~\ref{diss_latency} displays the communication resource consumption during the training. It shows that, compared with FedAvg, AirComp FL only requires a little more or even the same number of data epochs to achieve the target test accuracy, thereby imposing negligible impacts on convergence rate. Meanwhile, the communication resource consumption of the AirComp FL is much less than that of FedAvg, since the latter forces the users to use orthogonal channels for interference avoidance instead of performing concurrent transmission over the same spectrum. Note here that we use the normalized communication resources for Fig.~\ref{diss_latency} illustration and assume one unit communication resource is consumed in each communication round in AirComp FL.


\section{The Design of M-AirComp and M-AirComp-based FL}

\subsection{M-AirComp Design}

Different from the most existing AirComp approaches with an analogy modulation scheme, we establish a digital modulation scheme for the AirComp to cater for the commercial transmit devices and design a multi-bit over-the-Air computation scheme (M-AirComp). To this end, the Rx-scaling factor $a$ performs as a digital domain equalizer, and the division operation in Eq.~\ref{eq:AirCompAvg} to calculate the arithmetic average is also in the digital domain. In order to eliminate the burden of redesigning the modulation scheme, we tend to integrate the gradient quantization to the most common Quadrature Amplitude Modulation (QAM) in LTE, 5G, and Wi-Fi 6 standard~\cite{x3gpp2014}. Instead of transmitting arbitrary values, gradients to transmit are clipped and quantized as Multiple Amplitude Shift Keying (MASK) symbols, so as to be compatible with modern digital devices. Two MASK-modulated gradients can be transmitted orthogonally using in-phase (I) and quadrature (Q) channel simultaneously. We notice that it is equivalent to map two separate gradients onto a symbol from the square $M^2$ QAM constellation. Here, we limit $M$ between $2$ to $2^b$. For example, when $b$ is set as 3, the user will use 64QAM to transmit two gradients, as shown in Fig.~\ref{fig:MOTA}. In this way, altering the value $M$ at the transmitter allows full digital data transmission while preserving $b$-bit resolution, according to the estimated channel gain. 

\begin{figure}
    \centering
    \includegraphics[width = 0.7\linewidth]{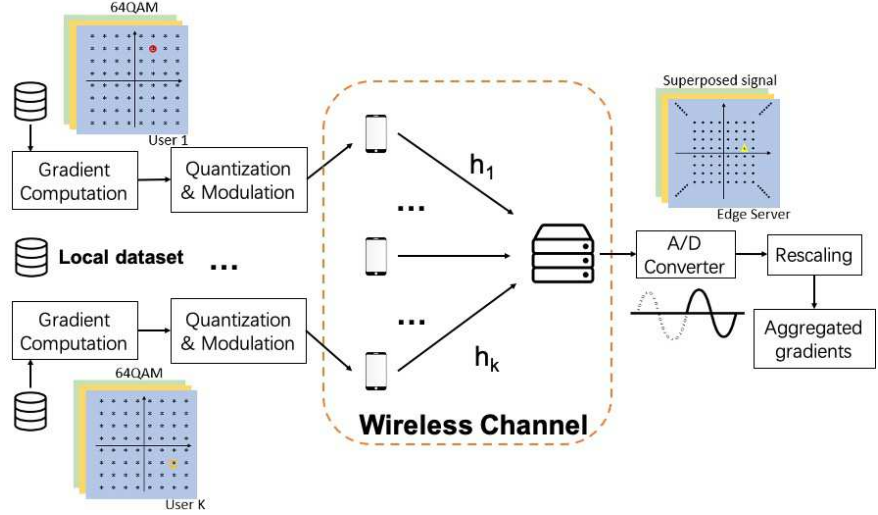}
    \caption{Multi-bit Over-the-Air computation design.}
    \label{fig:MOTA}
\end{figure}

Assume that the server equips with a high-resolution analog-to-digital converter (ADC) (e.g., 16-bit). While receiving, multiple QAM symbols superpose at the sampling instance, which can be viewed from (a part of) a higher-order rectangular QAM constellation diagram (when the number of mobile devices is odd) or a zero-centered constellation diagram (when the number of users is even). Since the biggest possible value after aggregation can be obtained from user feedback, we can utilize this value as the ADC reference voltage. In order to alleviate the detection complexity, we directly use the quantized samples followed by Rx-scaling defined in Eq.~\ref{eq:AirCompAvg} in the digital domain. In this way, the transmission module is implemented in a digital manner, which enables the M-AirComp to have better compatibility compared with traditional AirComp. The process is also illustrated in Fig.~\ref{fig:MOTA}. This result can be viewed as the desired computational result added by quantization error and channel noise, whose impacts on federated learning performance are analyzed in the following section.

During the transmission process, each device is constrained by an average transmitting power budget $P^0$. Assume that all the participating devices have the same power budget, where the average transmission power constraint is given by:
\begin{align}\label{power_cons}
    \mathbb{E}[|p_k|^2] \leq P^0, \forall k.
\end{align}

Due to the power limit, some users facing severe signal fading may not completely align their amplitude, which means that the Tx-scaling factor $p_k\in\mathbb{C}$ cannot be infinitely enlarged to meet the amplitude alignment requirement. Therefore, we adopt an energy efficient power control policy that the users with poor channel conditions are not allowed to transmit, i.e. the transmit power is set to be zero. Let $g_{\text{th}}$ be the channel gain threshold of possible transmission, and the power control policy $p_k$ can be represented as:
\begin{align}
p_{k}=\left\{\begin{array}{ll}
 \frac{\sqrt{\varrho}h_{k}^{\dagger}}{{\left|h_{k}\right|}^2}, & \left|h_{k}\right|^{2} \geq g_{\text {th }} \\
0, & \left|h_{k}\right|^{2}<g_{\text {th }}.
\end{array}\right.
\end{align}

Here, $\varrho$ is a scaling factor to guarantee the desired SNR. Under the above power control policy, only users facing channel gain larger than  $g_{\text {th }}$ can be allowed to transmit their updated gradients. Note that the threshold $g_{\text {th }}$ can be adjusted to control the gradient transmission. Due to the power constraint in Eq.~\ref{power_cons}, the threshold $g_{\text{th}}$ can be set as an arbitrary value larger than a minimum value $g_{\text{th}}^{\text{min}}:=h^2=\frac{\varrho}{p^2}=\frac{\varrho}{P^0}$. Specifically, under a certain communication environment, the greater the threshold $g_{\text{th}}$ we set, the larger the number of allowable transmitting gradients. By varying the threshold $g_{\text{th}}$, our M-AirComp design has the potential to only involve gradients with good channel conditions, which require less power to transmit and thereby benefits in energy saving. We define a long-term average transmission probability $p_b$ to indicate the degree of gradient participation, where any threshold $g_{\text{th}}$ will correspond to a transmission probability. Assume that the channel coefficient is Rayleigh distributed, i.e., $h_k\sim CN(0, \sqrt{\lambda})$ and the channel gain follows an exponential distribution. The transmission probability $p_b$ corresponding to the threshold $g_{\text {th }}$ can be calculated as:
\begin{align}
\begin{split}
    p_b = \int_{g_\text{th}}^{\infty}\lambda e^{-\lambda x}\mathrm{d} x = e^{-\lambda g_{\text {th }}}.
\end{split}
\end{align}

If the probability of keeping these gradient elements to transmit is $p_b$, the Rx-scaling factor $a$ will be set as $\frac{1}{\sqrt{\varrho}p_b}$ to rescale the received signal. Due to the property of the Rayleigh fading channel and the power constrain of the local user devices, the achievable highest transmission probability $p_b^\text{max}$ is calculated as $p_b^\text{max} = e^{-\lambda g^\text{min}_{\text {th }}}=e^{-\lambda \frac{\varrho}{P_0}}$.

\begin{algorithm}[t]
    \caption{\textbf{E}nergy and \textbf{S}pectrum Efficient \textbf{O}ver the \textbf{A}ir \textbf{F}ederated \textbf{L}earning Algorithm (ESOAFL)} \label{Alg1}
\hspace*{0.02in} {\bf Initialization:} Initialize the global model $\mathbf{w}^0$ and set $\mathbf{w}^{0,0}_k=\mathbf{w^0}, \forall k \in \mathcal{K}$; Set the learning rate $\gamma$ and $\eta$, local computing iterations $H$, and the channel gain threshold $g_\text{th}$\\
\hspace*{0.02in} Initialize the communication index $r=0$ and the local computing iteration count $h=0$
\begin{algorithmic}[1]
\While{$r < R$}
\For{$h=0,...,H-1$}
\State Each device $k$ computes the unbiased stochastic gradients $\nabla F_k(\mathbf{w}^{r,h}_k)$ of $f_k(\mathbf{w}_k^{r})$ with one batch size of data from the dataset $\mathcal{D}_k$
\State Each device $k$ in parallel updates its local model:
    $\mathbf{w}^{r,h+1}_k = \mathbf{w}^{r,h}_k - \eta \nabla F_k(\mathbf{w}^{r,h}_k), \quad \forall k\nonumber$
\EndFor
\State \textbf{end for}
\State Each device $k$ calculates the accumulated gradients with gradient quantization as $Q\left(\eta\sum_{h=0}^{H-1} \nabla F_k(\mathbf{w}^{r,h}_k)\right)$
\State Each device $k$ transmits the quantized accumulated gradients if the observed channel gain larger than the pre-selected threshold $g_{\text {th }}$, i.e., $\left|h_{k}\right|^{2} \geq g_{\text {th }}$; otherwise, no transmission
\State All transmitted gradients are aggregated over the air and the global model is updated as in Eq.~(\ref{glo_update}) 
\State Update $r\leftarrow r+1$
\State Each device $k$ updates its local model $\mathbf{w}_k^{r,0}=\mathbf{w}^{r}$
\EndWhile
\State \textbf{end while}
\end{algorithmic}
\end{algorithm}

\subsection{M-AirComp-based FL Design}

Based on M-AirComp, this subsection presents an \textbf{E}nergy and \textbf{S}pectrum Efficient \textbf{O}ver the \textbf{A}ir \textbf{F}ederated \textbf{L}earning (\textbf{ESOAFL}) algorithm integrating gradient quantization, as shown in Fig.~\ref{fig:MOTA}, where the goal is to improve the spectrum efficiency while reducing the energy consumption of the participating devices. The pseudocode of our algorithm is given in Alg.~\ref{Alg1}, and the details are described in the following.

Following the ESOAFL, all the participants start the training procedure with the initialized model parameters. Specifically, every FL user executes $H$ local computing SGD steps with mini-batch size data drawn from their own datasets. After the local training, a uniform gradient quantization operator $Q(\cdot)$ is utilized to quantize the updated gradients into low bits, i.e., 4-bit or 8-bit. Taking $b$-bit quantization as an example, the local updates of all the participants are quantized to $2^b$ levels with a specific maximum/minimum value, catering to the digital wireless transmission scheme. Next, for the transmission process, every $2$ gradient element is modulated into one digital symbol over the sub-channel according to our M-AirComp design. We assume the symbol-level synchronization among all the mobile devices that ensures coherent and concurrent transmission. This assumption can be realized by dedicating the bandwidth for mobile device synchronization, e.g., 1.08 MHz primary synchronization channel (PSCH) and secondary synchronization channel (SSCH) in LTE system \cite{sriharsha2017complete}, or the AirShare\cite{abari2015airshare} for distributed MIMO synchronization. Then we employ the M-AirComp operator $\text{Air}(\cdot)$, along with the proposed energy efficient power control policy. The threshold $g_{\text {th }}$ is determined firstly, and then gradient element whose corresponding channel gain larger than this threshold can be allowed to transmit. In this way, the long-term transmission probability $p_b$ can be calculated as $e^{-\lambda g_{\text{th}}}$. Because M-AirComp integrates wireless transmissions and aggregation over the air, the server receives only the aggregated updated gradients. Finally, the server updates the global model with the aggregated updated gradients, which can be represented as:
\begin{align}\label{glo_update}
    \mathbf{w}^{r+1} = \mathbf{w}^{r} - \text{Air}\left(\left\{ Q\left(\eta\sum_{h=0}^{H-1} \nabla F_k(\mathbf{w}^{r,h}_k)\right)\right\}_{\mathcal{K}}\right).
\end{align}

After updating the global model, the server will broadcast the global model to all devices for continuing training. We repeat the above procedure for $R$ rounds until the model converges to a stationary point. Particularly, the convergence requirement can be represented as $\frac{1}{R}\sum_{r=0}^{R-1}\|\nabla f^r\|^2_2 \leq \epsilon$, where $\epsilon$ denotes the target training loss and $\nabla f^r$ is the global function gradient at the $r$-th communication round. 

\section{Spectrum and Energy Efficient FL: Formulation and Solutions}

In this section, we formulate an overall energy minimization problem and establish the communication and computation energy models of the proposed ESOAFL algorithm. Based on the derived convergence analysis, we then optimize the control policy in terms of the transmission probability $p_b$ and local computing iterations $H$ to minimize the overall energy consumption.

\subsection{Energy Minimization Problem Formulation}

It is challenging to deploy energy-hungry FL tasks on mobile devices due to their limited battery capacity. Hence, in this work, we aim to minimize the total energy consumption of FL training via joint control of local computing iterations $H$ and transmission probability $p_b$. The average energy consumption per communication round of mobile device is cast as $E = E^{comm}(p_b) +  E^{comp} H$. Here, $E^{comm}(p_b)$ is the communication energy consumed to transmit the updated gradients, which is related to the transmission probability $p_b$, and $E^{comp}$ is the computing energy of performing one local iteration. The goal is to minimize the overall energy consumption during the federated training while guaranteeing the model convergence, denoted as 
\begin{align}
\begin{split}
     \min \quad &\mathbb{E}\left[E_{tot}\right] \triangleq \mathbb{E}\left[RE^{comm}(p_b)\right]  +   \mathbb{E}\left[RE^{comp}H\right]\\
     \text{s.t.,} \quad &\frac{1}{R} \sum_{r=0}^{R-1}  \mathbb{E}\left[\|\nabla f^r\|^2_2\right] \leq \epsilon.
\end{split}
\end{align}

\subsection{Communication and Computation Energy Models}

\subsubsection{Communication model} 
If we consider the M-AirComp power control policy with transmission probability $p_b$ whose value is smaller than $p^{\text{max}}_b$, the threshold channel gain is mapped as $g_\text{th}:= -\frac{1}{\lambda}\ln {p_b}$. In this way, the average power consumption among all the users and time slots will be:
\begin{align}
\begin{split}
    P^{\textit{comm}} =& p_b\varrho\int_{g_\text{th}}^{\infty} \lambda \frac{1}{x} e ^{-\lambda x} \mathrm{d} x\\
    =& -p_b\varrho\lambda\text{Ei}\left(-\lambda g_\text{th}\right)= -p_b\varrho\lambda\text{Ei}\left(\ln {p_b}\right),
\end{split}
\end{align}
where $\text{Ei}(x)$ is the exponential integral function denoted as $\text{Ei}(x) = \int_{-\infty}^{x}\frac{e^t}{t}\mathrm{d} x$. Due to the fact that $-\ln {p_b}$ is positive, we have $\text{Ei}\left(\ln {p_b}\right) = - \text{E}_1\left(- \ln {p_b}\right)$ where $\text{E}_1(x) = \int_{x}^{\infty}\frac{e^{-t}}{t}\mathrm{d} x$. Then we have $P^{comm} = -p_b\varrho\lambda\text{Ei}\left(\ln {p_b}\right) = p_b\varrho\lambda\text{E}_1\left(- \ln {p_b}\right)$. For any $x$ with positive real value, $\text{E}_1(x)$ can be bracketed by elementary functions as follows:
\begin{align}
    \text{E}_1(x) < e^{-x} \ln \left(1 + \frac{1}{x} \right).
\end{align}
Due to $-\ln {p_b} > 0$, we have 
\vspace{-1mm}
\begin{align}
    P^{comm} \approx p_b\varrho\lambda e^{\ln {p_b}} \ln \left(1 + \frac{1}{- \ln {p_b}} \right) = \varrho\lambda {p_b}^2 \ln(1 - \frac{1}{\ln {p_b}}).
\end{align}


After receiving the full precision gradients, each device is required to quantize the gradient into low-bit precision for digital transmission. Then, we adapt MASK to modulate the gradients, which means the magnitude of each symbol is sufficient to decode the transmission gradient. Let $T_s$ denote the symbol duration that is in inverse proportion to sub-channel bandwidth $B$. To transmit the  updated model with the size of $d$ gradients, $d/2$ symbol is required according to the M-AirComp design. Thus, the transmission time can be represented as $T^{comm} = \frac{d}{2M_s}T_s$, where $M_s$ symbols are transmitted in parallel.

Accordingly, the communication energy consumption for each device in each communication round is the product of the average transmission power and the transmission time, which is computed as
\begin{align}
    E^{comm} = P^{comm}\times T^{comm}.
\end{align}

\subsubsection{Computational model} 
With massive data generated or collected on mobile devices, local on-device computing can naturally be treated as computation-hungry tasks. Luckily, most modern smart devices are equipped with high-performance GPUs and can handle such heavy training tasks efficiently. This works considers the GPU computational energy model. We model the energy consumed to process a mini-batch of data in one iteration as a product of the runtime power and the execution time, i.e.,
\begin{equation}\label{ite_ener}
    E^{comp}= P^{comp} \times T^{comp},
\end{equation}
where $P^{comp}$ and $T^{comp}$ are runtime power and execution time of the edge device, respectively. Both of them are related to the GPU core frequency/voltage and the memory frequency in the forms of~\cite{mei2017energy}
\begin{equation}
    P^{comp}=P^0+a f^{mem}+ b (v^{core})^2 f^{core},
\end{equation}
\begin{equation}
    T^{comp}=T^0+\frac{u}{f^{mem}} + \frac{v}{f^{core}}.
\end{equation}

$P_0$ and $T_0$ are the static power and static time consumption; $f^{core}/v^{core}$ and $f^{mem}$ represent the core frequency/voltage and memory frequency, respectively. $a$, $b$, $u$, and $v$ are constant coefficients that reflect the sensitivity of the task execution to GPU memory and core frequency/voltage scaling~\cite{mei2017energy,abe2014power}. Given a specific FL task, i.e., a neural network model and the corresponding dataset, such coefficients can be accurately estimated based on the experiments. Since there are $H$ local computing iterations between two sequential communication rounds, the energy consumption of local computing in one communication round can be calculated as the product of the energy consumption of one iteration and the number of local iterations, i.e., $E^{comp} \times H$.

\subsection{Impacts of Control Variables on ESOAFL Convergence}
In this subsection, we derive the convergence analysis of the ESOAFL approach, where we theoretically analyze the impacts of control variables $p_b$ and $H$ on training convergence. Firstly, we have the following model assumptions.
\vspace{-0.1cm}
\begin{assumption}[Smoothness] The objective function $f_i$ is differentiable and L-smooth :
\begin{align}
    \|\nabla f_k(\mathbf{x})-\nabla f_k(\mathbf{y})\| \leq L\|\mathbf{x}-\mathbf{y}\|, \forall k.
\end{align}
\end{assumption}

\begin{assumption}[Bounded variances and second moments]\label{bound}

The variance and the second moments of stochastic gradients evaluated with a mini-batch can be bounded as
\begin{align}
    \mathbb{E}_{\xi_{i} \sim \mathcal{D}_{i}}\left\|\nabla F_{i}\left(\mathbf{w} ; \xi_{i}\right)-\nabla f(\mathbf{w})\right\|^{2} \leq \sigma^{2}, \forall \mathbf{w}, \forall i,\\
    \mathbb{E}_{\xi_{i} \sim \mathcal{D}_{i}}\left\|\nabla F_{i}\left(\mathbf{w} ; \xi_{i}\right)\right\|^{2} \leq \delta^{2}, \forall \mathbf{w}, \forall i,
\end{align}
where $\sigma$ and $\delta$ are positive constants.
\end{assumption}

\begin{assumption}[Quantization bounded variances]
The output of the quantization operator $Q(x)$ is an unbiased estimator of its input $x$, and its variance grows with the squared of L2-norm of its argument, i.e., $\mathbb{E}[Q(x)]=x$ and  $\mathbb{E}[||Q(x)-x||^2]=q||x||^2$.
\end{assumption}

Based on the above assumptions, we have the following lemma on the bounded variances of M-AirComp, where the power control policy with a transmission probability $p_b$ is applied for gradient uploading.

\begin{lemma}[M-AirComp bounded variances]
The output of the M-AirComp operator $\text{Air}(\mathcal{X})$ with the proposed power control scheme is an unbiased estimator of its input set $\mathcal{X}$, and its variance decreases with the increasing of the transmission probability and grows with the squared of its argument, i.e., $\mathbb{E}[\text{Air}(\mathcal{X})]=y$ and  $\mathbb{E}[||\text{Air}(\mathcal{X})-y||^2]=\frac{1}{K^2}(\frac{1}{p_b}-1)\sum_{x_k \in \mathcal{X}}x_k^2 + \frac{\sigma_z^2}{K^2{p_b}^2}$.
\end{lemma}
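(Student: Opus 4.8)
The plan is to compute the mean and variance of $\text{Air}(\mathcal{X})$ directly from its definition in Eq.~\eqref{eq:AirCompAvg} by conditioning on the random set of users that are allowed to transmit. First I would introduce, for each user $k$, a Bernoulli indicator $\mathbb{1}_k := \mathbb{1}\{|h_k|^2 \geq g_{\text{th}}\}$ with $\mathbb{E}[\mathbb{1}_k] = p_b$ (this is exactly the transmission probability derived earlier from the Rayleigh/exponential model). Under the stated power control policy, $h_k p_k = \sqrt{\varrho}$ whenever $\mathbb{1}_k = 1$ and $h_k p_k = 0$ otherwise, so with the Rx-scaling factor $a = \frac{1}{\sqrt{\varrho}\, p_b}$ the received signal simplifies to $\text{Air}(\mathcal{X}) = \frac{1}{K p_b}\sum_{k=1}^{K} \mathbb{1}_k x_k + \frac{1}{K\sqrt{\varrho}\,p_b} n$. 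The key point is that the messy channel coefficients cancel, leaving only the Bernoulli masks and the noise term as sources of randomness.

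Next I would take expectations. Unbiasedness follows immediately: $\mathbb{E}[\text{Air}(\mathcal{X})] = \frac{1}{K p_b}\sum_k \mathbb{E}[\mathbb{1}_k] x_k + 0 = \frac{1}{K}\sum_k x_k = y$, using $\mathbb{E}[\mathbb{1}_k] = p_b$ and $\mathbb{E}[n] = 0$. For the variance, I would write $\text{Air}(\mathcal{X}) - y = \frac{1}{K p_b}\sum_k (\mathbb{1}_k - p_b) x_k + \frac{1}{K\sqrt{\varrho}\,p_b} n$. Expanding the squared norm and taking expectations, the cross terms vanish: the noise is independent of the masks and zero-mean, and the masks for distinct users are independent with $\mathbb{E}[\mathbb{1}_k - p_b] = 0$ (so for $j \neq k$ the term $\mathbb{E}[(\mathbb{1}_j - p_b)(\mathbb{1}_k - p_b)] = 0$). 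What survives is $\frac{1}{K^2 p_b^2}\sum_k \mathrm{Var}(\mathbb{1}_k)\, x_k^2 + \frac{1}{K^2 \varrho\, p_b^2}\mathbb{E}[n^2]$. Substituting $\mathrm{Var}(\mathbb{1}_k) = p_b(1 - p_b)$ gives the first term $\frac{p_b(1-p_b)}{K^2 p_b^2}\sum_k x_k^2 = \frac{1}{K^2}\left(\frac{1}{p_b} - 1\right)\sum_k x_k^2$, and substituting $\mathbb{E}[n^2] = \sigma_z^2$ gives the second term $\frac{\sigma_z^2}{K^2 \varrho\, p_b^2}$, matching the claimed expression (up to the harmless presence/absence of the constant $\varrho$, which one can absorb into the normalization or take $\varrho = 1$).

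The only genuine subtlety is bookkeeping the sources of randomness correctly and justifying that the channel realizations truly drop out under the given $p_k$ — i.e.\ that conditioning on $\{|h_k|^2 \geq g_{\text{th}}\}$ does not bias $h_k p_k$ away from $\sqrt{\varrho}$, which holds because the policy sets $p_k = \sqrt{\varrho}\,h_k^{\dagger}/|h_k|^2$ so that $h_k p_k = \sqrt{\varrho}$ identically on that event regardless of the value of $h_k$. Everything else is a routine second-moment expansion; there is no hard analytic step. I would also remark that the variance is manifestly decreasing in $p_b$ on $(0,1]$, vanishing to the noise floor as $p_b \to 1$, which is the qualitative statement in the lemma.
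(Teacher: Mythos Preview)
Your proposal is correct and follows essentially the same route as the paper: both compute $\mathbb{E}[\text{Air}(\mathcal{X})]$ and $\mathbb{E}[\|\text{Air}(\mathcal{X})-y\|^2]$ by a direct second-moment expansion, using that distinct users transmit independently with probability $p_b$ and that the channel/precoding product collapses to $\sqrt{\varrho}$ on the transmit event. The only cosmetic difference is that the paper expands $\mathbb{E}[(\text{Air}(\mathcal{X}))^2]$ and then subtracts $y^2$, whereas you center first and invoke $\mathrm{Var}(\mathbb{1}_k)=p_b(1-p_b)$; your observation about the stray $\varrho$ in the noise term is also apt, and the paper simply suppresses it.
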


\begin{proof}

Let $\mathcal{X}$ be the input set of the M-AirComp operator, and we further define $\bar{\mathcal{X}}$ as the successful transmit set to help the proof. Accordingly, the mean and the mean of the square values can be expressed as:
\begin{flalign}
     \mathbb{E}[\text{Air}(\mathcal{X})]
     =&\mathbb{E}\left[\frac{1}{p_b K}\Bigg[\sum_{x_k \in \bar{\mathcal{X}}}x_{k} +\sum_{x_k \notin \bar{\mathcal{X}}}x_{k} + n\Bigg]\right]\nonumber\\
     =&\frac{1}{p_bK}\left[\sum_{x_k \in \bar{\mathcal{X}}}x_k\cdot p_b + \sum_{x_k \notin \bar{\mathcal{X}}}0\cdot (1-p_b) + \mathbb{E}\left[ n\right]\right]=y,\\
    \mathbb{E}[(\text{Air}(\mathcal{X}))^2]
     =&\mathbb{E}\left[\frac{1}{{p_b}^2K^2}\left(\sum_{x_k \in \mathcal{X}}x_{k} + n\right)^2\right]\nonumber\\
     =&\mathbb{E}\left[\frac{1}{{p_b}^2K^2}\left(\sum_{x_i \in \mathcal{X}}\sum_{x_j \in \mathcal{X}}x_{i}x_{j} + 2\sum_{x_k \in \mathcal{X}}x_{k} n +  n^2\right)\right]\nonumber\\
     =&\frac{1}{{p_b}^2K^2}\left[\sum_{x_i \in \mathcal{X}}\sum_{x_j \in \mathcal{X}, i \neq j} x_i{p_b}x_j{p_b} + \sum_{x_k \in \mathcal{X}}x_k^2 {p_b}\right] + \frac{\sigma_z^2}{K^2{p_b}^2}\nonumber\\
     =&\frac{1}{{p_b}^2K^2}\left[ {p_b}^2\left((\sum_{x_k \in \mathcal{X}}x_k)^2-\sum_{x_k \in \mathcal{X}}x_k^2 \right) + {p_b} \sum_{x_k \in \mathcal{X}}x_k^2 + {\sigma_z^2}\right] \nonumber\\
     =&\frac{1}{K^2}\left((\sum_{x_k \in \mathcal{X}}x_k)^2 + (\frac{1}{{p_b}}-1)\sum_{x_k \in \mathcal{X}}x_k^2 \right) + \frac{\sigma_z^2}{K^2{p_b}^2}.
\end{flalign}

Thus, the variance is equal to the mean of the square value minus the square of the mean value, which is represented as:

\begin{align}\label{AirComp_variance}
\begin{split}
    \text{Var}(\text{Air}(\mathcal{X})) =& \mathbb{E}[(\text{Air}(\mathcal{X}))^2] - \mathbb{E}[\text{Air}^2(\mathcal{X})]\\
    =& y^2 + \frac{1}{K^2}(\frac{1}{{p_b}}-1)\sum_{x_k \in \mathcal{X}}x_k^2 + \frac{\sigma_z^2}{K^2{p_b}^2} - y^2\\
    =&\frac{1}{K^2}(\frac{1}{{p_b}}-1)\sum_{x_k \in \mathcal{X}}x_k^2 + \frac{\sigma_z^2}{K^2{p_b}^2}.
\end{split}
\end{align}

\end{proof}

\begin{theorem}\label{theorem_1}

For the proposed ESOAFL approach, under the above assumptions, if learning rates $\theta$ and $\eta$ satisfy
\begin{align}
    1\geq L^2\eta^2H^2 + HL\theta \eta\frac{q(2-{p_b}) +K{p_b}}{K{p_b}},
\end{align}
the convergence rate after $R$ communication rounds can be bounded as: 
\begin{align}
\begin{split}
    \frac{1}{R}\sum_{r=0}^{R-1}\|\nabla f^r\|^2_2 \leq \frac{2 (f(\mathbf{w}^{0})-f(\mathbf{w}^{*})}{\eta\theta  HR}+\frac{\eta\theta  L}{K}\frac{(p_b+q)}{p_b}\sigma^2+ \eta^2L^2H\sigma^2 + \frac{\theta \eta L}{HK^2{p_b}^2}\sigma_z^2,
\end{split}
\end{align}
where $q$ is the gradient quantization precision, $p_b$ is the M-AirComp transmission probability, $H$ is the local computing iterations, and $f(\mathbf{w}^{*})$ is the minimum value of the loss.
\end{theorem}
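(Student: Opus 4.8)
The plan is to follow the standard SGD-style descent-lemma argument for federated learning with compressed/noisy updates, adapted to the M-AirComp aggregation. First I would start from $L$-smoothness of $f$ applied to consecutive global iterates $\mathbf{w}^{r+1}$ and $\mathbf{w}^r$, writing
\begin{align}
    f(\mathbf{w}^{r+1}) \leq f(\mathbf{w}^r) + \langle \nabla f^r, \mathbf{w}^{r+1}-\mathbf{w}^r\rangle + \frac{L}{2}\|\mathbf{w}^{r+1}-\mathbf{w}^r\|^2. \nonumber
\end{align}
Here $\mathbf{w}^{r+1}-\mathbf{w}^r = -\,\text{Air}(\{Q(\eta\sum_{h} \nabla F_k(\mathbf{w}_k^{r,h}))\}_{\mathcal{K}})$, so I would take the conditional expectation over the stochastic minibatches, the quantization randomness, and the channel/noise realizations. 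Using that $Q$ and $\text{Air}$ are unbiased (Assumption~3 and Lemma~1), the conditional expectation of the update is $\eta\theta$ times the averaged sum of true local gradients $\frac{1}{K}\sum_k\sum_h \nabla f_k(\mathbf{w}_k^{r,h})$ (absorbing the Rx-scaling/learning-rate bookkeeping into $\theta$), which I would relate to $\nabla f^r = \nabla f(\mathbf{w}^r)$ via the usual $-\langle a,b\rangle = \tfrac12(\|a-b\|^2 - \|a\|^2 - \|b\|^2)$ identity.

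The next block of steps handles the three error sources that show up in the second-moment term $\mathbb{E}\|\mathbf{w}^{r+1}-\mathbf{w}^r\|^2$. I would decompose it into (i) the variance contributed by M-AirComp, bounded by Lemma~1 as $\frac{1}{K^2}(\frac{1}{p_b}-1)\sum_k\|\cdot\|^2 + \frac{\sigma_z^2}{K^2 p_b^2}$, which after using Assumption~2 (bounded second moments $\delta^2$, or the variance bound $\sigma^2$) and the quantization variance $q\|\cdot\|^2$ gives the $\frac{(p_b+q)}{p_b}$ and $\frac{\sigma_z^2}{H K^2 p_b^2}$ factors; (ii) the quantization variance itself, again via Assumption~3; and (iii) the "client drift" term $\|\mathbf{w}_k^{r,h}-\mathbf{w}^r\|^2$ accumulated over the $H$ local steps, which I would bound by a standard recursion: since each local step moves by $\eta\nabla F_k$, unrolling gives a bound of order $\eta^2 H \sum_{h}\|\nabla F_k(\mathbf{w}_k^{r,h})\|^2$ plus a $\sigma^2$ term, and then $L$-smoothness converts gradient norms at the drifted iterates back to $\|\nabla f^r\|^2$ plus lower-order terms. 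This drift bound is exactly where the stepsize condition $1 \geq L^2\eta^2H^2 + HL\theta\eta\frac{q(2-p_b)+Kp_b}{Kp_b}$ is needed: it ensures the coefficient multiplying $\|\nabla f^r\|^2$ on the right-hand side stays non-positive (or is dominated), so that term can be moved to the left and the recursion closes.

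After assembling the one-round inequality of the form
\begin{align}
    \mathbb{E}[f(\mathbf{w}^{r+1})] \leq \mathbb{E}[f(\mathbf{w}^r)] - \frac{\eta\theta H}{2}\mathbb{E}\|\nabla f^r\|^2 + (\text{noise/quant/drift terms}), \nonumber
\end{align}
I would telescope over $r=0,\dots,R-1$, use $f(\mathbf{w}^R)\geq f(\mathbf{w}^*)$, divide by $\frac{\eta\theta H R}{2}$, and collect the residual terms into the stated bound: the $\frac{2(f(\mathbf{w}^0)-f(\mathbf{w}^*))}{\eta\theta H R}$ from the telescoped potential, the $\frac{\eta\theta L}{K}\frac{(p_b+q)}{p_b}\sigma^2$ from the aggregation-plus-quantization variance, the $\eta^2 L^2 H \sigma^2$ from the local-drift stochasticity, and the $\frac{\theta\eta L}{H K^2 p_b^2}\sigma_z^2$ from channel noise. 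The main obstacle I expect is step (iii): carefully controlling the coupled recursion between the client-drift terms $\sum_{k,h}\mathbb{E}\|\mathbf{w}_k^{r,h}-\mathbf{w}^r\|^2$ and $\mathbb{E}\|\nabla f^r\|^2$ while keeping track of how the quantization factor $q$ and transmission probability $p_b$ enter the cross term $\langle \nabla f^r, \mathbb{E}[\text{update}]\rangle$ versus the second-moment term — getting the precise combination $\frac{q(2-p_b)+Kp_b}{Kp_b}$ in the stepsize condition requires bounding $\mathbb{E}\|\text{Air}(Q(\cdot))\|^2$ by expanding both the $Q$-variance and the $\text{Air}$-variance and not double-counting, which is the delicate bookkeeping in this proof. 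Everything else is routine convexity-free descent-lemma algebra.
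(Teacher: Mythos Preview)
Your proposal is correct and follows essentially the same approach as the paper's proof: start from $L$-smoothness, use unbiasedness of $Q$ and $\text{Air}$ to handle the inner product via the polarization identity, bound the second moment $\mathbb{E}\|\text{Air}(Q(\cdot))\|^2$ by expanding the Air- and $Q$-variances together (this is the paper's Lemma~3 and is precisely where the combination $\frac{q(2-p_b)+Kp_b}{Kp_b}$ emerges), bound the client drift $\|\mathbf{w}^r-\mathbf{w}_k^{r,h}\|^2$ by the standard local-SGD recursion (the paper's Lemma~2), combine, invoke the stepsize condition, and telescope. One small clarification: the stepsize condition is used to make the net coefficient of the \emph{local} gradient norms $\sum_{k,h}\|\nabla f_k^{r,h}\|^2$ non-positive so those terms can be dropped, rather than controlling the coefficient of $\|\nabla f^r\|^2$, which already appears cleanly as $-\frac{\eta\theta H}{2}\|\nabla f^r\|^2$ from the inner-product bound.
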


\begin{proof}
    Please refer to the Appendix.~A for the proof. 
\end{proof}

The above Theorem~\ref{theorem_1} is derived based on the L-smoothness gradient assumption on global objective~\cite{haddadpour2021federated}. After expanding the inequality of the global objective, we first bound the inner product between the stochastic gradient and full batch gradient, while we can also bound the distance between the global model and the local model. Further, we bound the updated gradients with M-AirComp and quantization operators. Finally, by integrating the derived results above, we finish the convergence analysis of the ESOAFL algorithm.


\begin{corollary}\label{conv_coroll1}

To achieve the linear speedup, we need to have $\theta  \eta = O\left(\frac{\sqrt{K}}{\sqrt{RH}}\right)$. If we further choose $\theta  \eta =O\left(\frac{1}{L}\sqrt{\frac{K{p_b}}{RH({p_b}+q)}}\right)$, the convergence rate can be represented as:

\begin{flalign}
     &\frac{1}{R}\sum_{r=0}^{R-1}\|\nabla f^r\|^2_2  \leq \frac{2L (f(\mathbf{w}^{0})-f(\mathbf{w}^{*})\sqrt{({p_b}+q)}}{\sqrt{KRH{p_b}}} +\\
     &\frac{\sqrt{{p_b}+q}}{\sqrt{KRH{p_b}}}\sigma^2  \nonumber + \frac{K}{R\theta ^2}\sigma^2 + \sqrt{\frac{1}{K^3RH^3({p_b}+q){p_b}^3}}\sigma_z^2\nonumber\\
     &\overset{(a)}=O\left( \frac{\sqrt{{p_b}+q}}{\sqrt{KRH{p_b}}}(2L(f(\mathbf{w}^{0})-f(\mathbf{w}^{*}) + \sigma^2)) + \frac{K}{R\theta ^2}\sigma^2   \right)\nonumber\\
     &\overset{(b)}=O\left(\frac{\chi }{\sqrt{KRH}}\right) + O\left(\frac{K}{R}\right),\nonumber
\end{flalign}
where $(a)$ is due to the fact that $O(\sqrt{\frac{1}{K^3R}})$ decays faster than $O(\sqrt{\frac{1}{KR}})$, and we replace $ \sqrt{\frac{{p_b}+q}{{p_b}}}$ by $\chi$ in $(b)$.
\end{corollary}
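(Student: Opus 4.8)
The plan is to obtain Corollary~\ref{conv_coroll1} as a direct consequence of Theorem~\ref{theorem_1}: substitute the prescribed learning-rate schedule into the four-term bound, simplify each summand, drop the subdominant one, and read off the order. Before substituting I would verify that $\theta\eta = \frac{1}{L}\sqrt{\frac{Kp_b}{RH(p_b+q)}}$ is admissible, i.e., that it eventually satisfies the precondition $1 \ge L^2\eta^2H^2 + HL\theta\eta\frac{q(2-p_b)+Kp_b}{Kp_b}$ of the theorem. With this schedule the second summand is of order $\sqrt{H/R}$ (up to a factor depending only on $K$, $q$, $p_b$) and $L^2\eta^2H^2$ is of order $KH/(R\theta^2)$; both tend to $0$ as $R\to\infty$, so the precondition holds for all $R$ past a finite threshold, which is the asymptotic regime in which the corollary is stated.

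For the substitution I would process the four terms of the Theorem~\ref{theorem_1} bound one by one. Plugging $\theta\eta = \frac{1}{L}\sqrt{\frac{Kp_b}{RH(p_b+q)}}$ into the first term $\frac{2(f(\mathbf{w}^{0})-f(\mathbf{w}^{*}))}{\eta\theta HR}$ gives $\frac{2L(f(\mathbf{w}^{0})-f(\mathbf{w}^{*}))\sqrt{p_b+q}}{\sqrt{KRHp_b}}$; the second term $\frac{\eta\theta L}{K}\frac{p_b+q}{p_b}\sigma^2$ collapses, after cancelling $L$ and collecting the $\sqrt{K}$ and $p_b$ factors, to $\frac{\sqrt{p_b+q}}{\sqrt{KRHp_b}}\sigma^2$; the fourth term $\frac{\theta\eta L}{HK^2p_b^2}\sigma_z^2$ collapses, using $K^2/\sqrt{K}=\sqrt{K^3}$, $H\sqrt{H}=\sqrt{H^3}$, $p_b^2/\sqrt{p_b}=\sqrt{p_b^3}$, to $\sqrt{\frac{1}{K^3RH^3(p_b+q)p_b^3}}\sigma_z^2$. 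The third term $\eta^2L^2H\sigma^2$ depends on $\eta$ alone; writing $\eta=(\theta\eta)/\theta$ and using the linear-speedup scaling $\theta\eta=\Theta(\sqrt{K/(RH)})$ yields $\eta^2L^2H=\Theta(K/(R\theta^2))$ after absorbing $L^2$ and the schedule constant, so this term is $O(\frac{K}{R\theta^2})\sigma^2$. Summing the four reproduces the first displayed inequality of the corollary.

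The last step is the asymptotic reduction marked $(a)$ and $(b)$. For $(a)$: the $\sigma_z^2$ term scales like $O(\sqrt{1/(K^3RH^3)})$, which is dominated by the $O(\sqrt{1/(KRH)})$ scaling shared by the first two terms, so it can be absorbed into the big-$O$; pulling the common factor $\frac{\sqrt{p_b+q}}{\sqrt{KRHp_b}}$ out of terms one and two gives line $(a)$. For $(b)$: treating $\sigma^2$, $L$, $\theta$, and $f(\mathbf{w}^{0})-f(\mathbf{w}^{*})$ as $\Theta(1)$ constants and setting $\chi:=\sqrt{(p_b+q)/p_b}$ turns the first term into $O(\chi/\sqrt{KRH})$ and the second into $O(K/R)$, the claimed rate. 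I would close by observing that $O(\chi/\sqrt{KRH})$ is the linear-speedup term --- error of order $1/\sqrt{KT}$ in the total number of local steps $T=RH$, so $K$-fold more workers buy a $\sqrt{K}$ speedup --- while $O(K/R)$ is a higher-order term, negligible once $R$ dominates $K$ (concretely once $R=\Omega(K^3H/\chi^2)$).

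There is no single hard estimate here; the obstacle is bookkeeping discipline. The points needing care are (i) confirming the schedule is genuinely consistent with the precondition of Theorem~\ref{theorem_1} rather than merely substituted formally; (ii) using the $O(\cdot)$ inside ``$\theta\eta=O(\cdot)$'' with one fixed hidden constant independent of $K,R,H,p_b,q$, so the two big-$O$ lines are honest; and (iii) being transparent that $\theta$ appears in the $O(K/(R\theta^2))$ term only because $\eta$ and $\theta$ are not pinned down individually --- imposing $\theta=\Theta(1)$ reduces it to a clean $O(K/R)$.
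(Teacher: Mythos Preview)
Your proposal is correct and follows exactly the paper's implicit derivation: the corollary is obtained by substituting the stated learning-rate schedule into the four-term bound of Theorem~\ref{theorem_1}, simplifying term by term, and then absorbing the $\sigma_z^2$ contribution into the dominant $O(\sqrt{1/(KRH)})$ term, with the justifications for steps $(a)$ and $(b)$ already spelled out in the corollary statement itself. Your additional check that the schedule is admissible for the precondition of Theorem~\ref{theorem_1} and your explicit handling of the $\eta^2L^2H$ term via $\eta=(\theta\eta)/\theta$ are more careful than the paper, which states the corollary without a separate proof.
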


Based on the convergence analysis, we further give the following corollary on the communication complexity, i.e., the required number of communication rounds, of our ESOAFL algorithm.

\begin{corollary}\label{conv_coroll2}
From the Corollary~\ref{conv_coroll1}, the required maximum number of communications for achieving the $\epsilon$ target training loss, i.e., satisfying $\epsilon = \frac{1}{R}\sum_{r=0}^{R-1}\|\nabla f^r\|^2_2$, is given by
\begin{flalign}\label{big_O}
    R &= O\left(\frac{2 \epsilon \sigma^2 H K^{2}+\chi^{2} (\delta + \sigma^2)^{2} \theta ^{2}}{2 \epsilon^{2} \theta ^{2} H K}\right)+O\left(\frac{+\chi (\delta + \sigma^2) \theta  \sqrt{4 \epsilon \sigma^2 H K^{2}+\chi^{2} (\delta + \sigma^2)^{2} \theta ^{2}}}{2 \epsilon^{2} \theta ^{2} H K}\right)\nonumber\\
    &= O\left(K\right)+ O\left(\frac{\chi^2}{HK}\right) + O\left(\frac{\chi}{\sqrt{H}}\right),
\end{flalign}
where $\chi = \sqrt{\frac{p_b+q}{p_b}}$ and $\delta = 2L (f(\mathbf{w}^{0})-f(\mathbf{w}^{*}))$.

\end{corollary}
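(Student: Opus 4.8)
The plan is to treat Corollary~\ref{conv_coroll1} as a black box that hands us a non-asymptotic upper bound on $\frac{1}{R}\sum_{r=0}^{R-1}\|\nabla f^r\|_2^2$ in the variable $R$, to impose that this bound equals the target loss $\epsilon$, and to invert the resulting relation for $R$. First I would collapse the bound of Corollary~\ref{conv_coroll1} into the compact form $\epsilon \le \frac{C_1}{\sqrt{R}} + \frac{C_2}{R}$: with the substitutions $\delta = 2L(f(\mathbf{w}^0)-f(\mathbf{w}^*))$ and $\chi = \sqrt{(p_b+q)/p_b}$, the two $O(R^{-1/2})$ terms combine into $C_1 = \frac{\chi(\delta+\sigma^2)}{\sqrt{KH}}$, while the $O(R^{-1})$ term has coefficient $C_2 = \frac{K\sigma^2}{\theta^2}$ (into which the $\eta^2L^2H\sigma^2$ contribution of Theorem~\ref{theorem_1} has already been folded via the learning-rate choice $\theta\eta = O\!\big(\frac1L\sqrt{Kp_b/(RH(p_b+q))}\big)$, using $p_b/(p_b+q)\le 1$). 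The channel-noise term $\propto\sigma_z^2$ scales like $R^{-1/2}H^{-3/2}K^{-3/2}(p_b+q)^{-1/2}p_b^{-3/2}$ and is thus dominated, for large $H$ and $K$, by the retained $O(R^{-1/2})$ term, so it will be absorbed into the $O(\cdot)$ at the end.

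Next I would solve $\epsilon = \frac{C_1}{\sqrt{R}} + \frac{C_2}{R}$ exactly. Multiplying by $R$ and setting $t = \sqrt{R} > 0$ produces the quadratic $\epsilon t^2 - C_1 t - C_2 = 0$, whose only admissible root is $t = \frac{C_1 + \sqrt{C_1^2 + 4\epsilon C_2}}{2\epsilon}$ (the other root is negative, hence extraneous). Squaring and expanding $(u+v)^2 = u^2 + 2uv + v^2$ with $u = C_1$, $v = \sqrt{C_1^2 + 4\epsilon C_2}$ gives
\begin{align}
R \;=\; t^2 \;=\; \frac{C_1^2 + 2\epsilon C_2}{2\epsilon^2} \;+\; \frac{C_1\sqrt{C_1^2 + 4\epsilon C_2}}{2\epsilon^2}.
\end{align}
Substituting $C_1 = \frac{\chi(\delta+\sigma^2)}{\sqrt{KH}}$ and $C_2 = \frac{K\sigma^2}{\theta^2}$ and clearing denominators reproduces exactly the two bracketed expressions of~\eqref{big_O}: the first bracket becomes $\frac{2\epsilon\sigma^2HK^2 + \chi^2(\delta+\sigma^2)^2\theta^2}{2\epsilon^2\theta^2HK}$, and the second becomes $\frac{\chi(\delta+\sigma^2)\theta\sqrt{4\epsilon\sigma^2HK^2 + \chi^2(\delta+\sigma^2)^2\theta^2}}{2\epsilon^2\theta^2HK}$.

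Finally I would extract the asymptotic order. Treating $\epsilon$, $\theta$, $\sigma$, $L$, and $f(\mathbf{w}^0)-f(\mathbf{w}^*)$ (hence $\delta$) as constants, the first bracket splits into $\frac{\sigma^2 K}{\epsilon\theta^2} = O(K)$ and $\frac{\chi^2(\delta+\sigma^2)^2}{2\epsilon^2 HK} = O(\chi^2/(HK))$. For the square-root term I would use $\sqrt{a+b}\le\sqrt a+\sqrt b$: the piece $\sqrt{4\epsilon\sigma^2HK^2} = O(\sqrt H\,K)$ times the prefactor $O(\chi/(HK))$ yields $O(\chi/\sqrt H)$, while the piece $\sqrt{\chi^2(\delta+\sigma^2)^2\theta^2} = O(\chi)$ times the same prefactor yields $O(\chi^2/(HK))$, already accounted for. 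Collecting terms gives $R = O(K) + O(\chi^2/(HK)) + O(\chi/\sqrt H)$, as claimed. The only non-mechanical step is the first one: correctly identifying, after the Corollary~\ref{conv_coroll1} learning-rate schedule is substituted back into Theorem~\ref{theorem_1}, precisely which terms survive at orders $R^{-1/2}$ and $R^{-1}$, and verifying that the $\sigma_z^2$ channel-noise term is genuinely of lower order so that it disappears into the $O(\cdot)$; once $\epsilon = C_1 R^{-1/2} + C_2 R^{-1}$ is in hand, the remainder is just the quadratic formula and routine big-$O$ bookkeeping.
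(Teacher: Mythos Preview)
Your proposal is correct and follows exactly the natural derivation the paper leaves implicit: set the Corollary~\ref{conv_coroll1} bound equal to $\epsilon$, solve the resulting quadratic in $\sqrt{R}$, and then read off the big-$O$ orders. The paper states Corollary~\ref{conv_coroll2} without proof, and your computation recovers both the explicit expression in~\eqref{big_O} and the final asymptotic $O(K)+O(\chi^2/(HK))+O(\chi/\sqrt{H})$; your identification of $C_1$ and $C_2$ from the four terms of Corollary~\ref{conv_coroll1} (with the $\sigma_z^2$ term absorbed as lower order) is the only non-mechanical step, and you handle it correctly.
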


\subsection{Overall Energy Minimization Reformulation and Solution}
With the above models, we calculate the total energy consumed by the participating mobile devices during the entire training process as:

\begin{subequations}  
\begin{align}
E_{total}(p_0,H)  &= R \times \left(E^{comm} + H E^{comp} \right)\label{OrigObj} \\
& =\left(\frac{A_0(p_b+q)}{p_bH} + \frac{B_0\sqrt{p_b+q}}{\sqrt{p_b H}}+ C_0\right)\times \left(\varrho\lambda {p_b}^2 \ln(1 - \frac{1}{\ln p_b})T^{comm} + H E^{comp} \right), \nonumber
\end{align}
\end{subequations}
where $A_0$, $B_0$, and $C_0$ are constants used to approximate the big-$O$ notion in Eq.~\ref{big_O}.
From the above formula, we observe that increasing the local computing iterations $H$ reduces the needed communication rounds $R$ (``talking"), but increases the computing energy consumption per round (``working"). Similarly, adjusting $p_b$ also affects the required communication rounds and the energy consumption of each round. Thus, it is necessary to optimize $H$ and $p_b$ to balance the ``working" and ``talking", thus minimizing the overall energy consumption. To this end, we formulate the Joint local Computing and transmission Probability (JCP) control problem as:
\begin{subequations}  \label{OrigProb}
\begin{align}
\min_{\substack{p_b,H}} \quad  &\left(\frac{A_0(p_b+q)}{p_bH} + \frac{B_0\sqrt{p_b+q}}{\sqrt{p_b H}}+ C_0\right)\times \left(\varrho\lambda {p_b}^2 \ln(1 - \frac{1}{\ln p_b})T^{comm} + H E^{comp} \right)\\
s.t.\quad & 0 < p_b \leq p^{max}_b, \label{OrigC1}\\
    \quad & H \in \mathcal{H}, \label{OrigC2}
\end{align}
\end{subequations}

For notational simplicity, we define $\boldsymbol{\phi}=\{p_b,H\}$ and represent the objective function as \textbf{$\Theta(\boldsymbol{\phi}) = \Theta _1(\boldsymbol{\phi})\times \Theta _2(\boldsymbol{\phi})$}, where
\begin{flalign} 
    &\Theta _1(\boldsymbol{\phi}) = \frac{A_0(p_b+q)}{p_b H} + \frac{B_0\sqrt{p_b+q}}{\sqrt{p_bH}}+ C_0, \\
    &\Theta _2(\boldsymbol{\phi}) = \varrho\lambda {p_b}^2 \ln(1 - \frac{1}{\ln p_b})T^{comm} + H E^{comp}.
\end{flalign} 

Noticing the decoupled constraints in (\ref{OrigC1}-\ref{OrigC2}), we relax the constraint in (\ref{OrigC2}) as $H_{min} \leq H \leq H_{max}$, where $H_{min}$ and $H_{max}$ are the minimum and the maximum integer in $\mathcal{H}$, respectively. Moreover, we can identify that both function $\Theta _1(\boldsymbol{\phi})$ and $\Theta _2(\boldsymbol{\phi})$ are positive and convex after calculating the first and second-order partial derivative of these two functions. (Please refer to Appendix.~B for the detailed derivation.)

\begin{algorithm}[!t]
\caption{JCP Control Algorithm} \label{JCP}
\hspace*{0.02in} {\bf Initialization:}  $\epsilon, \xi, \iota = 10^{-5}$; $\gamma^0\in(0,1]$; $\kappa=0$
\begin{algorithmic}[1]
\Repeat
    \State Solve (\ref{ApproProb}) and set the optimal value as $\boldsymbol{\phi}^*(\boldsymbol{\phi}^\kappa$) 
    \State Set $\boldsymbol{\phi}^{\kappa+1}=\boldsymbol{\phi}^\kappa+\gamma^0(\boldsymbol{\phi}^*(\boldsymbol{\phi}^\kappa)-\boldsymbol{\phi}^\kappa)$
    \State Set $\kappa=\kappa+1$ and $\gamma^\kappa=\gamma^{\kappa-1}(1-\xi \gamma^{\kappa-1})$
    \Until{$||\boldsymbol{\phi}^\kappa-\boldsymbol{\phi}^{\kappa-1}||_2^2\leq \iota$}
    \State Round the current $H$ to the  nearest integer in $\mathcal{H}$
\State \Return The current solutions of $p_b$ and $H$.
\end{algorithmic}
\end{algorithm}

Capturing such the ``product-of-convexity'' property of the objective function $\Theta(\boldsymbol{\phi})$, we use the inner convex approximation method~\cite{scutari2016parallel} to solve the relaxed JCP control problem by optimizing a sequence of strongly convex inner approximations of $\Theta(\boldsymbol{\phi})$ in the form: given $\boldsymbol{\phi}^\kappa \in \Phi$
\begin{align}\label{appro_func}
    \Theta(\boldsymbol{\phi}, \boldsymbol{\phi}^\kappa) = \Theta_1(\boldsymbol{\phi})\Theta_2(\boldsymbol{\phi}^\kappa) + \Theta_1(\boldsymbol{\phi}^\kappa)\Theta_2(\boldsymbol{\phi}),
\end{align}
where $\boldsymbol{\phi}^\kappa  = \{H^\kappa, {p_b}^\kappa \}$ refers to the intermediate $\boldsymbol{\phi}$ obtained in the $\kappa$-th iteration. Obviously, the approximated objective function in (\ref{appro_func}) is strongly convex with the fixed $\boldsymbol{\phi}^\kappa$. With the surrogate function above, we are essentially required to compute the optimal solutions of the following convex optimization problem in each iteration, while preserving the feasibility of the iterates to the original problem in (\ref{OrigProb}).
\begin{subequations}  \label{ApproProb}
\begin{align}
\min_{\substack{p_b,H}}  \quad & \Theta(\boldsymbol{\phi}, \boldsymbol{\phi}^\kappa)  \\
s.t.\quad & \quad 0 < p_b \leq p^{max}_b, \\
    \quad & H_{min} \leq H \leq H_{max}.  
\end{align}
\end{subequations}

Notice that the problem (\ref{ApproProb}) can be solved by various commercial solvers, e.g., IBM CPLEX optimizer~\cite{IBMopt}. The formal description of the Joint Power and Aggregation Control Algorithm is presented in Alg.~\ref{JCP}. Starting from a feasible point $\boldsymbol{\phi}^0$, the method consists in iteratively computing the solution $\boldsymbol{\phi}^*(\boldsymbol{\phi}^\kappa)$ to the surrogate problem (\ref{ApproProb}), and then taking a step from $\boldsymbol{\phi}^\kappa$ towards $\boldsymbol{\phi}^*(\boldsymbol{\phi}^\kappa)$. The process is repeated until it meets the termination criterion, and the value of $H$ is rounded afterward to ensure its feasibility.

\section{Performance Evaluation}

\subsection{Implementation of M-AirComp}
As shown in Fig.~\ref{fig:testbed}, we first set up experiments to elaborate on the usage of M-AirComp for a FL testbed. The system comprises one edge server and two edge devices. We let one RTX-8000 server with one USRP X310 play the role of the over-the-air FL aggregator. Each FL client consists of the NVIDIA Jetson TX2 as the computation unit and USRP N210 as the wireless transmitter. We also use WBX 50-2200 MHz Rx/Tx USRP daughterboards, with up to 200 mW output power. The synchronization is provided by USRP X310 REF and PPS output ports through cable connection. In the end, all USRPs are connected to an internet switch. We run MATLAB codes from the Communication Toolbox Support Package for USRP Radio to control the transmitting and receiving in different sessions on the RTX-8000 server.


\begin{figure}[t]
\centering
\begin{minipage}[t]{0.48\textwidth}
\centering
{\includegraphics[width=0.95\linewidth]{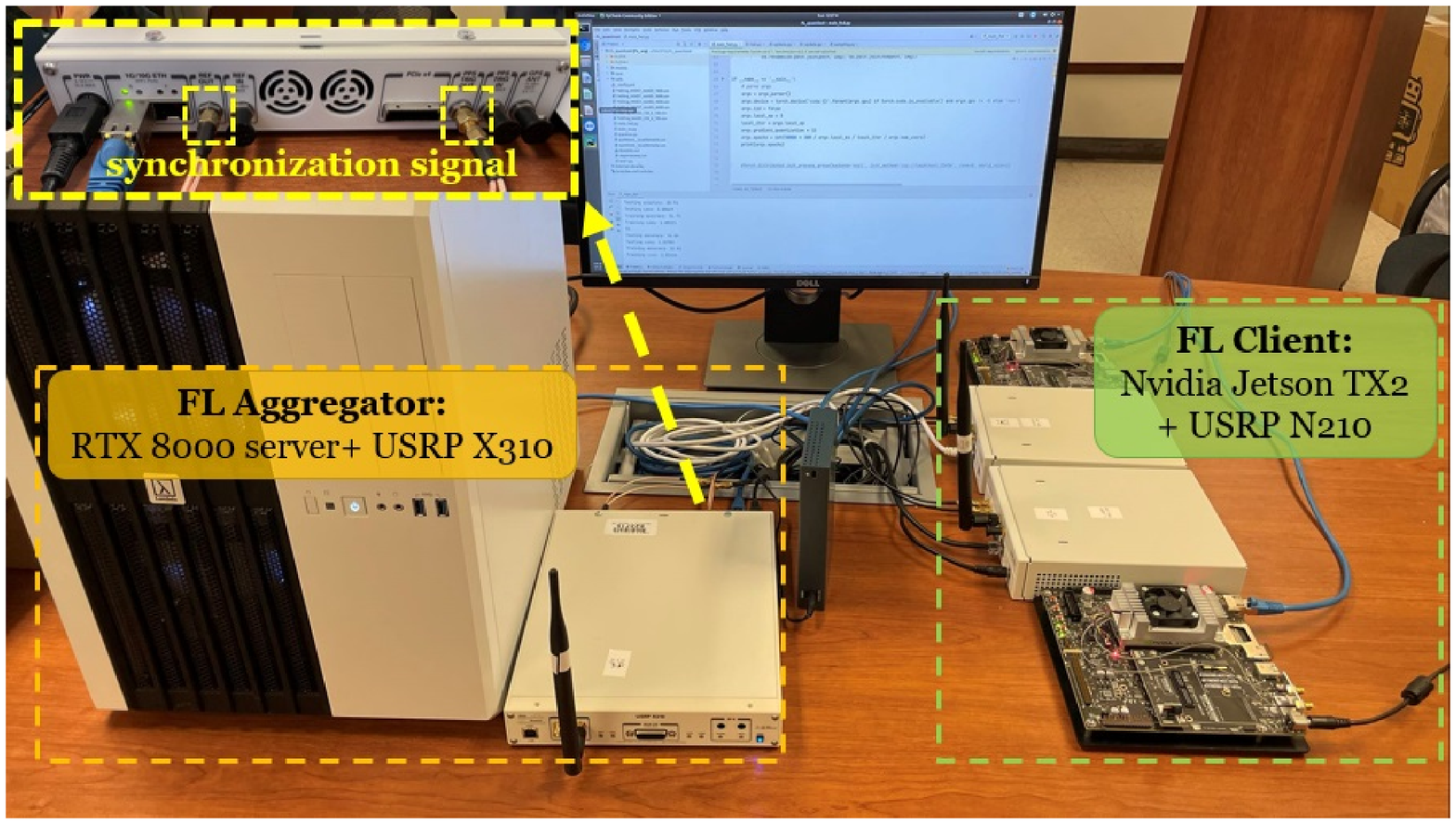}}
 \caption{The testbed of FL with M-AirComp.} \label{fig:testbed}
\end{minipage}
\begin{minipage}[t]{0.48\textwidth}
\centering
\includegraphics[width=0.99\linewidth]{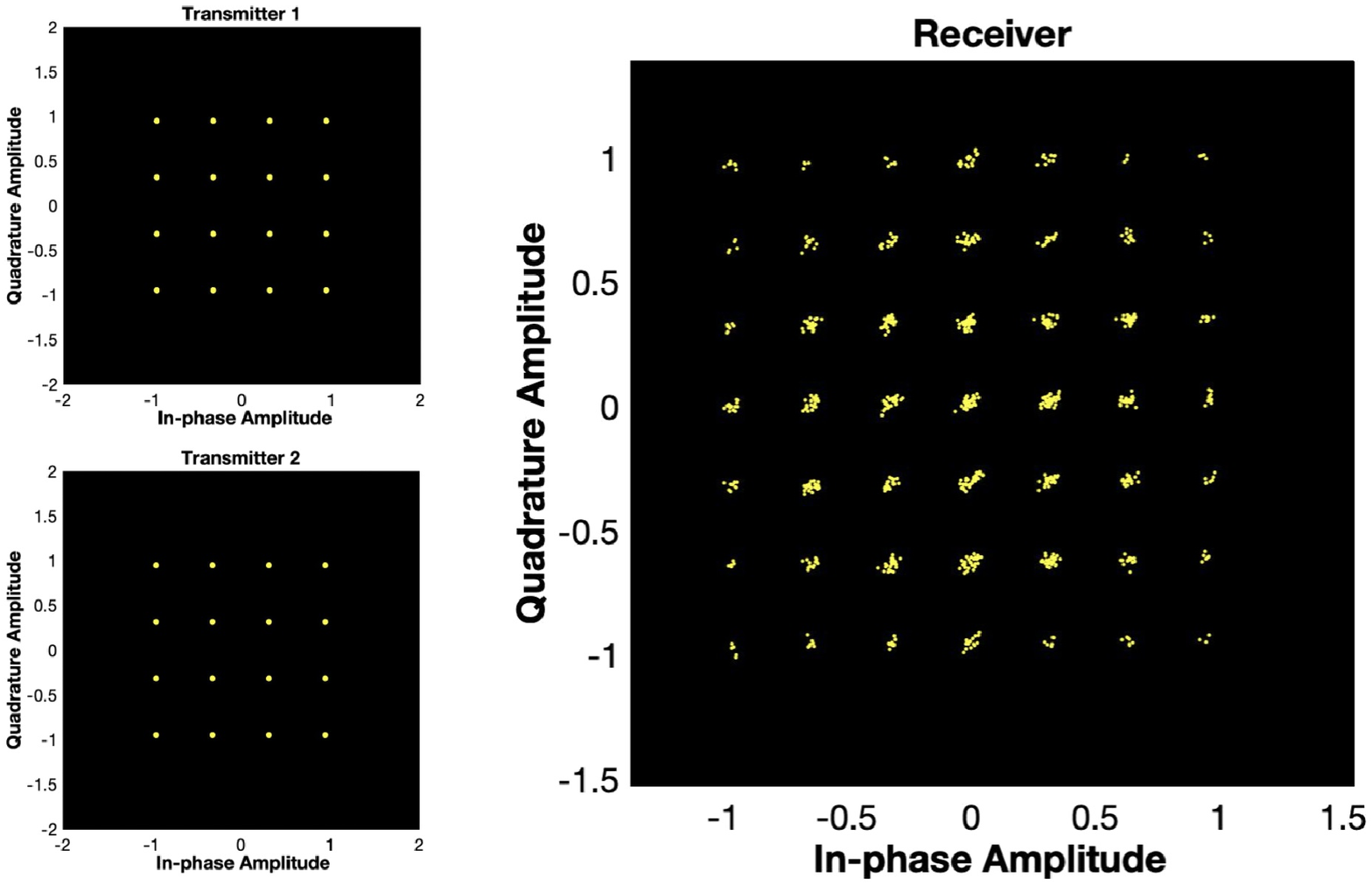}
\caption{Constellation diagram of M-AirComp demo (left: transmitter; right: receiver).} \label{Fig:M-Air}
\end{minipage}
\end{figure}
We first verify the feasibility of M-AirComp by the in-lab experiments. In our M-AirComp demo incorporated with quantization, two edge devices are transmitting QAM symbols, for example, 16 QAM for 4-bit quantization. From the constellation in Fig.~\ref{Fig:M-Air}, the receiving symbol set is expanded into a constellation for higher-order modulations, which explains the addition carried in the over-the-air computation from the communication point of view. The aggregated symbol will be further decoded as a quantized model update, with a certain probability of bit error with regards to the signal-to-noise ratio (SNR).


\subsection{Some Observations of the ESOAFL}
As we have discussed in Sec.~\ref{motiv_aircompfl}, AirComp can dramatically improve the spectrum efficiency in the FL training process. In addition, if the communication environment (i.e., channel condition) is extremely poor, our proposed ESOAFL approach can still retain the performance in the case of many participating devices. We consider a severe communication environment with a SNR = 5dB over different numbers of participants(e.g., K = 10, 20, and 30). Here, we train the ResNet-20 model with the CIFAR-10 dataset. As shown in Fig.~\ref{fig:poor_condition}, with the increasing number of participating devices, the convergence gap between the ESOAFL approach and its ideal case (i.e., FedAvg without channel noise) gradually decreases. This verifies that the AirComp variance is decreasing with the number of participating devices $K$, which is also shown in Eq.~\ref{AirComp_variance}. Moreover, especially with a large number of participants (e.g., K=30), the training curve of the ESOAFL approach is similar to its ideal case (i.e., FedAvg) in terms of training epochs, which also exhibits the strong anti-interference ability of the proposed ESOAFL approach.  

\begin{figure}[t]
\centering
\begin{minipage}[t]{0.48\textwidth}
\centering
{\includegraphics[width=0.99\linewidth]{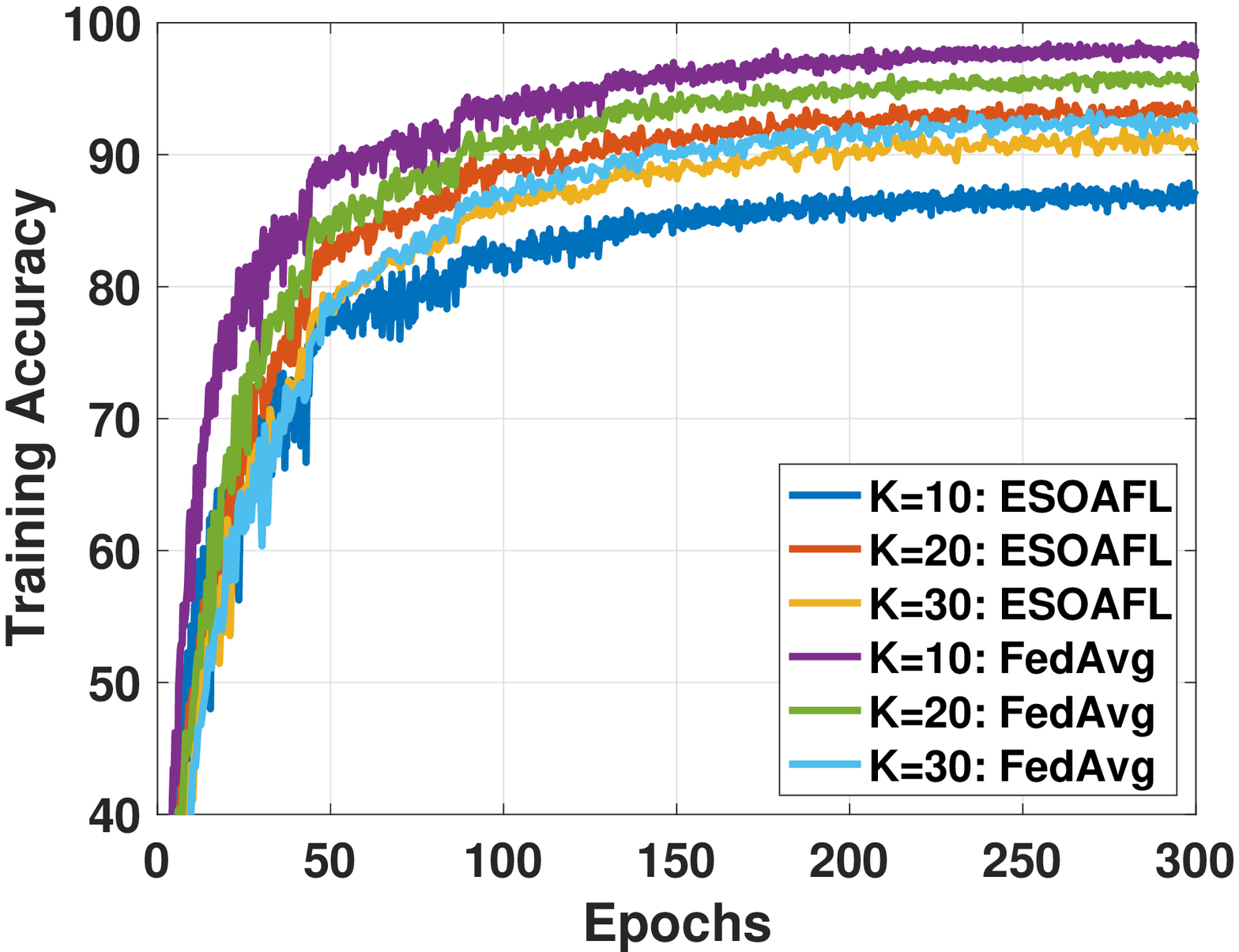}}
 \caption{Training performance under poor channel conditions.} \label{fig:poor_condition}
\end{minipage}
\begin{minipage}[t]{0.48\textwidth}
\centering
\includegraphics[width=0.99\linewidth]{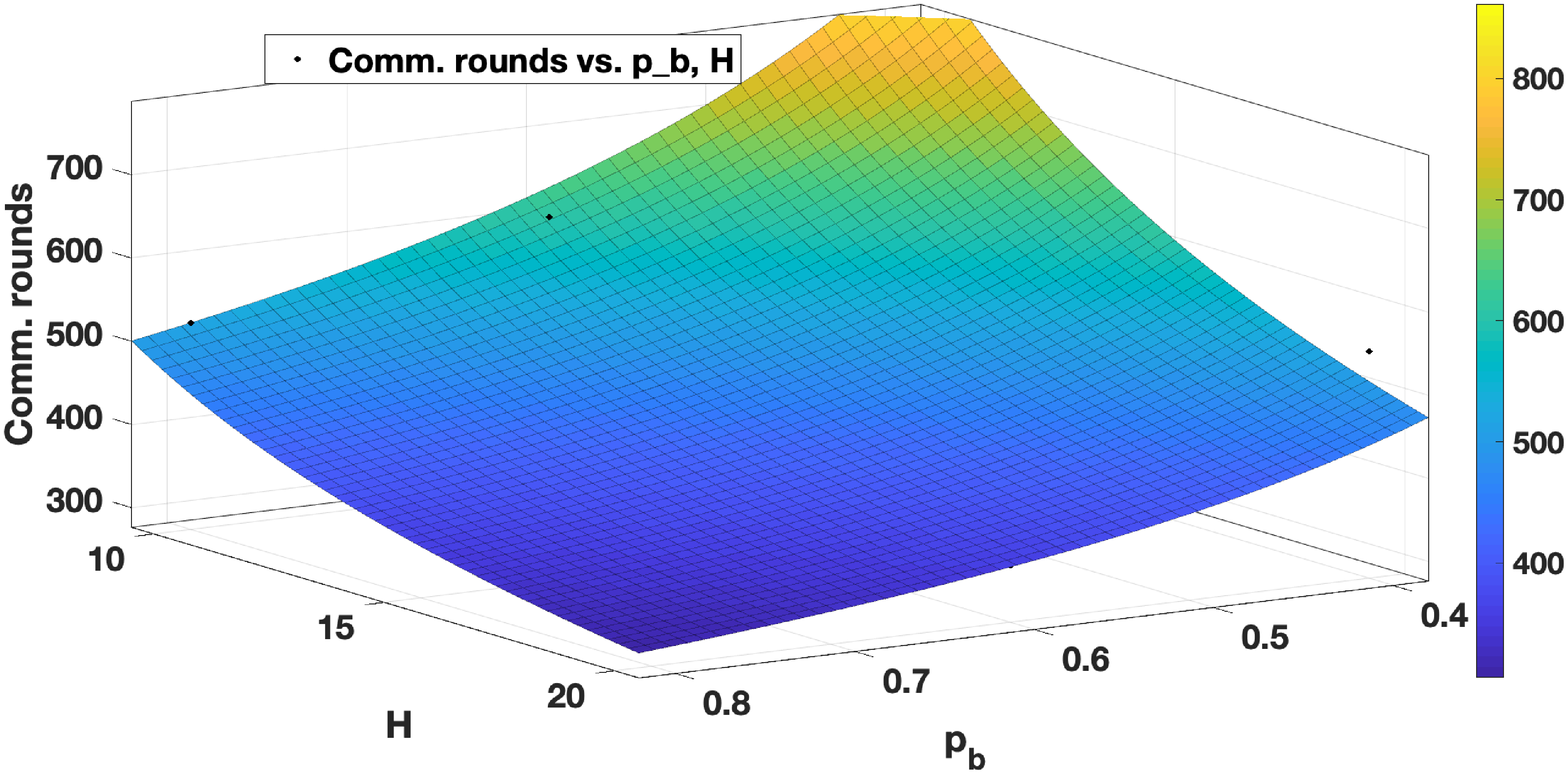}
\caption{Comm. rounds vs. $p_b$ and H.} \label{fig:fitting}
\end{minipage}
\end{figure}

Recall that the parameters $A_0$, $B_0$, and $C_0$ exist in the JCP control problem, which are related to the specific learning model and dataset. Here, we use a sampling-based method to estimate these parameters, where we empirically sample different combinations ($H$, $p_b$) and employ the derived bound in (\ref{big_O}) to infer their values. Take the ResNet-20 model with the CIFAR-10 dataset as an example. We first implement various local computing iterations $H$ and different transmission probabilities $p_b$ for the training task. Then we set the target training loss and record the corresponding number of communication rounds. After receiving these records, we utilize the Non-linear least squares curve fitting algorithm~\cite{lawson1995solving} to estimate the values of $A_0$, $B_0$, and $C_0$, which are shown in Fig.~\ref{fig:fitting}. We observe that, with the increase of local computing iterations $H$ and transmission probability $p_b$, the number of required communication rounds required is decreasing, but this effect is gradually weakened. At the same time, the computing energy consumption of each round increases linearly with the incremental of local computing iterations $H$. Thus, the energy trade-off between local computing and wireless communications has to be considered to minimize the overall energy consumption, where $H$ and transmission probability $p_b$ are required to be carefully selected.

\subsection{Spectrum and Energy Efficiency of the ESOAFL}

After the parameter estimation, we implement the proposed JCP control scheme to find the optimal local computing iterations $H$ and transmission probability $p_b$. Here, we consider two different image classification models and datasets to verify the effectiveness of our proposed approach, where the LeNet model on the MNIST dataset is relatively light, and ResNet-20 on CIFAR-10 is relatively complex. Both datasets consist of 50000 training images and 10000 test images in 10 classes, and we set batch size as $128$ and $32$ for ResNet and LeNet, respectively. In each round of FL, we set $K = 10$ participating mobile devices executing $H$ steps of SGD in parallel, and the maximum transmission probability $p_b^{\max}$ is set to $0.77$ according to the simulated communication environment and the power constraint. The initial learning rate is $\eta = 0.2$ with a fixed decay rate. We consider several popular FL schemes as baseline approaches compared with our proposed ESOAFL-OPT approach (i.e., ESOAFL with optimal JCP control).
\begin{itemize}
    \item FedAvg~\cite{mcmahan2017communication}: FL without AirComp, where the ideal noise-free transmission is supposed.
    \item FedPAQ~\cite{reisizadeh2020fedpaq}: the participants transmit the quantized model updates.
    \item OBDA-ADV~\cite{Zhu2020twconebit}: a modified version of the OBDA (one-bit digital AirComp), where we improve the original scheme by ignoring the quantization at the receiver to preserve the learning precision.
    \item ESOAFL-MAX: the proposed ESOAFL scheme without the transmission control, where we adopt the maximum transmission probability $p_b^{\max}$ to transmit the model updates.
\end{itemize}


We assume the same communication bandwidth for all the schemes. We utilize the Nvidia TX2 as the mobile device and deploy Jtop~\cite{jtop} tool to measure the computing energy, where the LeNet model consumes $0.03$J, and the ResNet model consumes $0.5$J for one training iteration. For example, training the ResNet model for one iteration consumes $130$ms, and the GPU power is nearly $4$W. We assume the AirComp can be deployed in the commercial LTE system for wireless transmissions. The resource block is $180$ kHz, and we can obtain the transmission time of local updates with the specific model size accordingly. Moreover, we assume the average maximum transmit power is $0.2$W. Thus, the transmission energy consumption can be calculated as the product of transmit power and transmission time. In all schemes, We set the average SNR=15dB for participants, whose channel quality can be reflected by the CQI (Channel Quality Indicator) category 11. In this case, the modulation scheme, code rate, bits per resource element are 64QAM, 0.8525, 5.115, respectively, in FedAvg and FedPAQ for a fair comparison.


\begin{figure*} \centering 
  \subfigure[Training loss vs. comm. resources\label{Fig:exp1}]
  {\includegraphics[width=.46\textwidth]{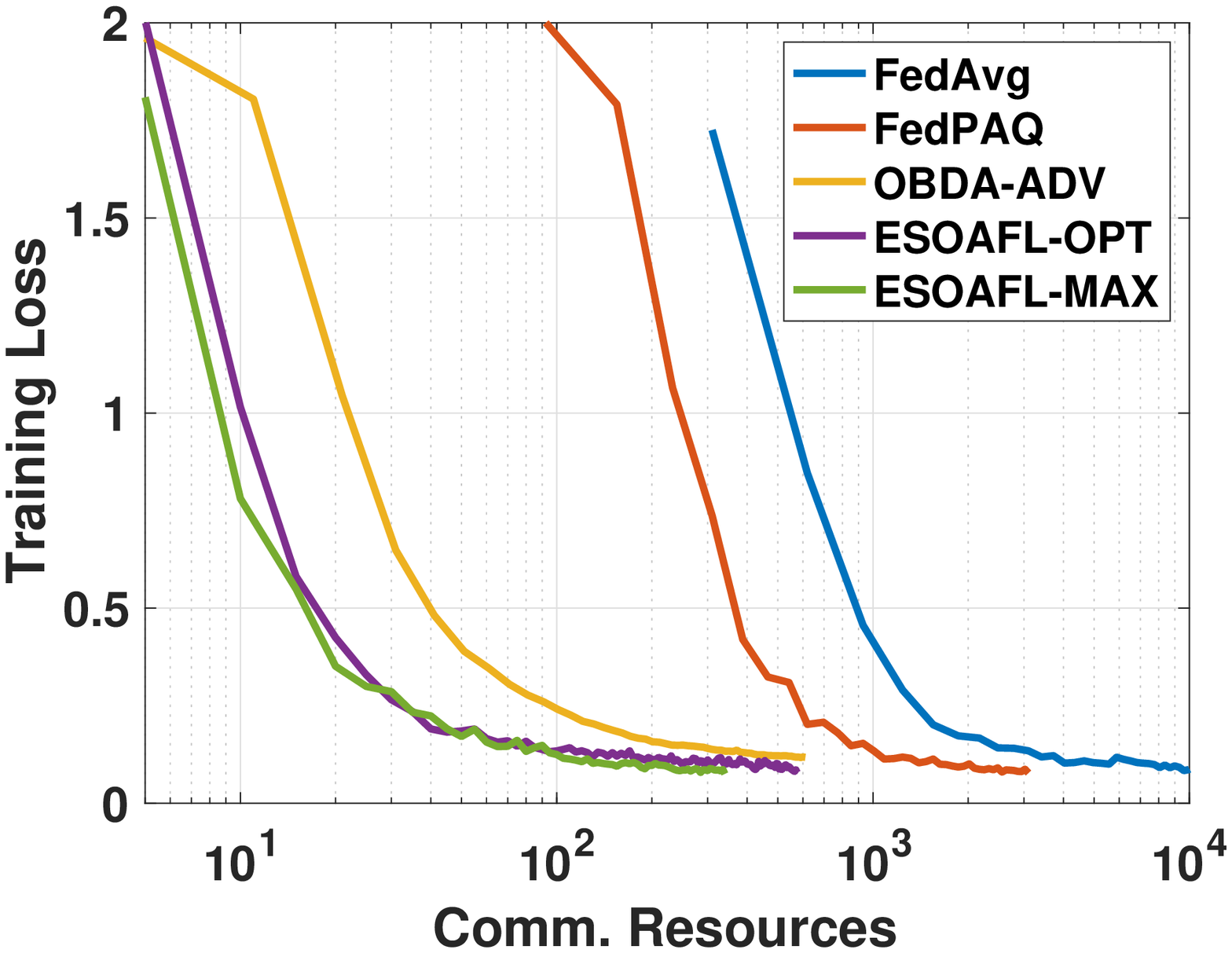}}
  \subfigure[Training acc. vs. energy cons.\label{Fig:exp2}]
  {\includegraphics[width=.46\textwidth]{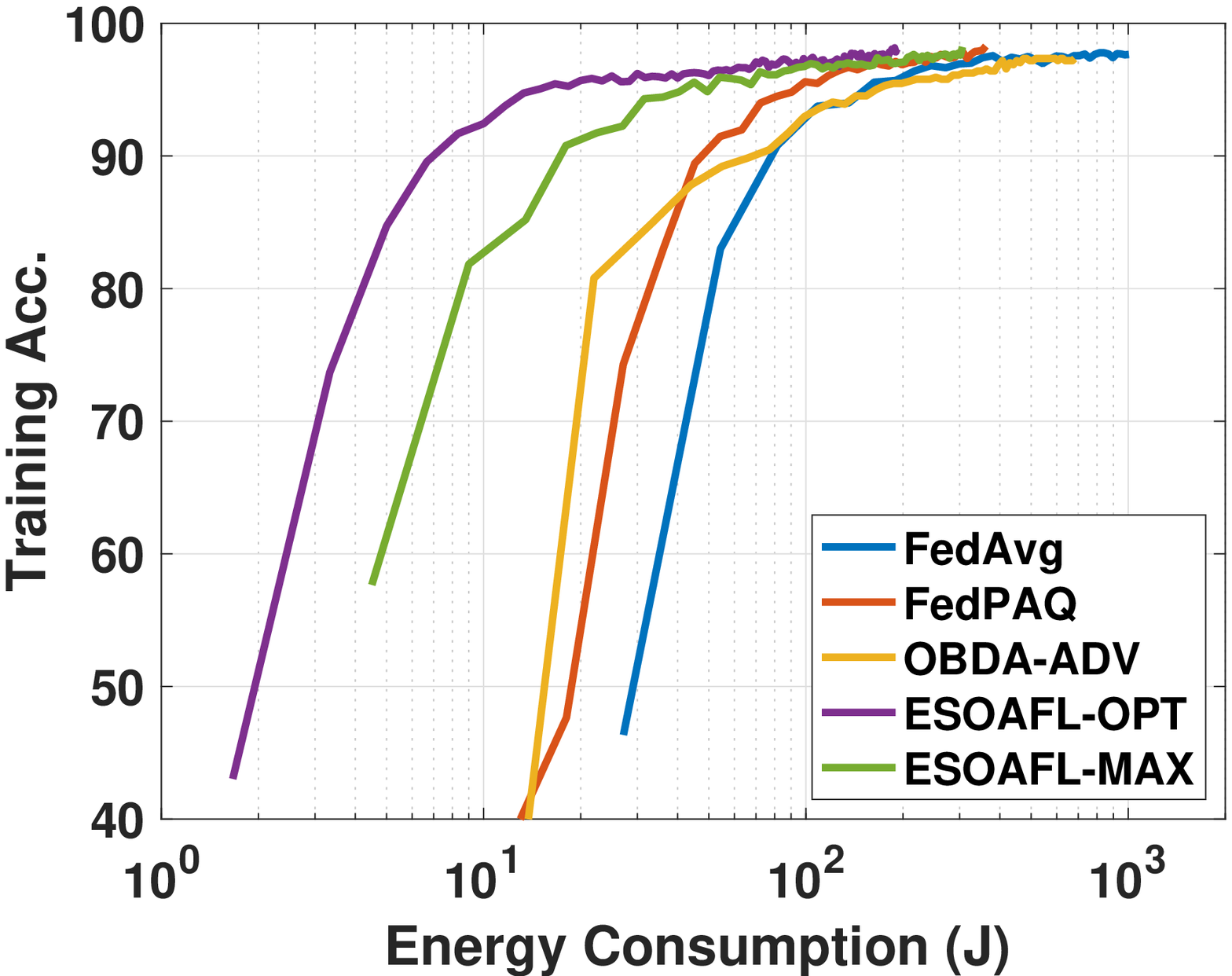}}
  \caption{Training performance of LeNet on MNIST dataset. } \label{Fig:exp11}
\end{figure*}

Fig.~\ref{Fig:exp1} and \ref{Fig:exp2} show the simulation results for LeNet on MNIST. Here, we set the target training loss $\epsilon$ as $0.07$ and assume the data samples are independent and identically distributed (IID). For the OBDA-ADV scheme, we bring the local SGD method (i.e., taking several training steps among the sequential communication rounds) into the original scheme. Let the spectrum resource consumed in each round of ESOAFL be the unit communication resource. We set the gradient quantization level as $4$-bit in ESOAFL and FedPAQ. Fig.~\ref{Fig:exp1} illustrates the communication resources consumption during the training procedure, and we can obviously find that the proposed ESOAFL significantly improves the spectrum efficiency compared with FedAvg, FedPAQ. One reason is that FedAvg and FedPAQ consume more communication resources in each round because all devices cannot take the concurrent transmission with the same bandwidth. Another reason is that each pair of gradients can be transmitted orthogonally using in-phase (I) and quadrature (Q) channels simultaneously in the proposed ESOAFL scheme. However, according to the LTE protocol, each resource element can only carry several bits of a gradient in FedAvg and FedPAQ. In the meantime, Fig.~\ref{Fig:exp2} presents the energy consumption during FL training, where $H=3$ and $p_b=0.29$ is obtained for optimal controlling of ESOAFL. The results show that our ESOAFL scheme consumes the least energy among all schemes. Specifically, when achieving the same target training loss, the energy efficiency of ESOAFL-OPT is twice and three times higher than that of FedPAQ and OBDA-ADV, respectively. This is because the energy efficient power control policy and the digital modulation scheme in the M-AirComp design save both the transmit power and time. Moreover, since the optimized transmission probability is much lower than the maximum value, our ESOAFL-OPT approach only consumes nearly half of the ESOAFL-MAX approach's energy, which demonstrates the necessity of the JCP control scheme. Note that the low-precision OBDA-ADV approach cannot reach the target training loss we set, and thus we consider the training loss $\epsilon=0.12$ for the OBDA-ADV approach. 


\begin{figure*} \centering 
  \subfigure[Training loss vs. comm. resources\label{Fig:exp3}]
  {\includegraphics[width=.46\textwidth]{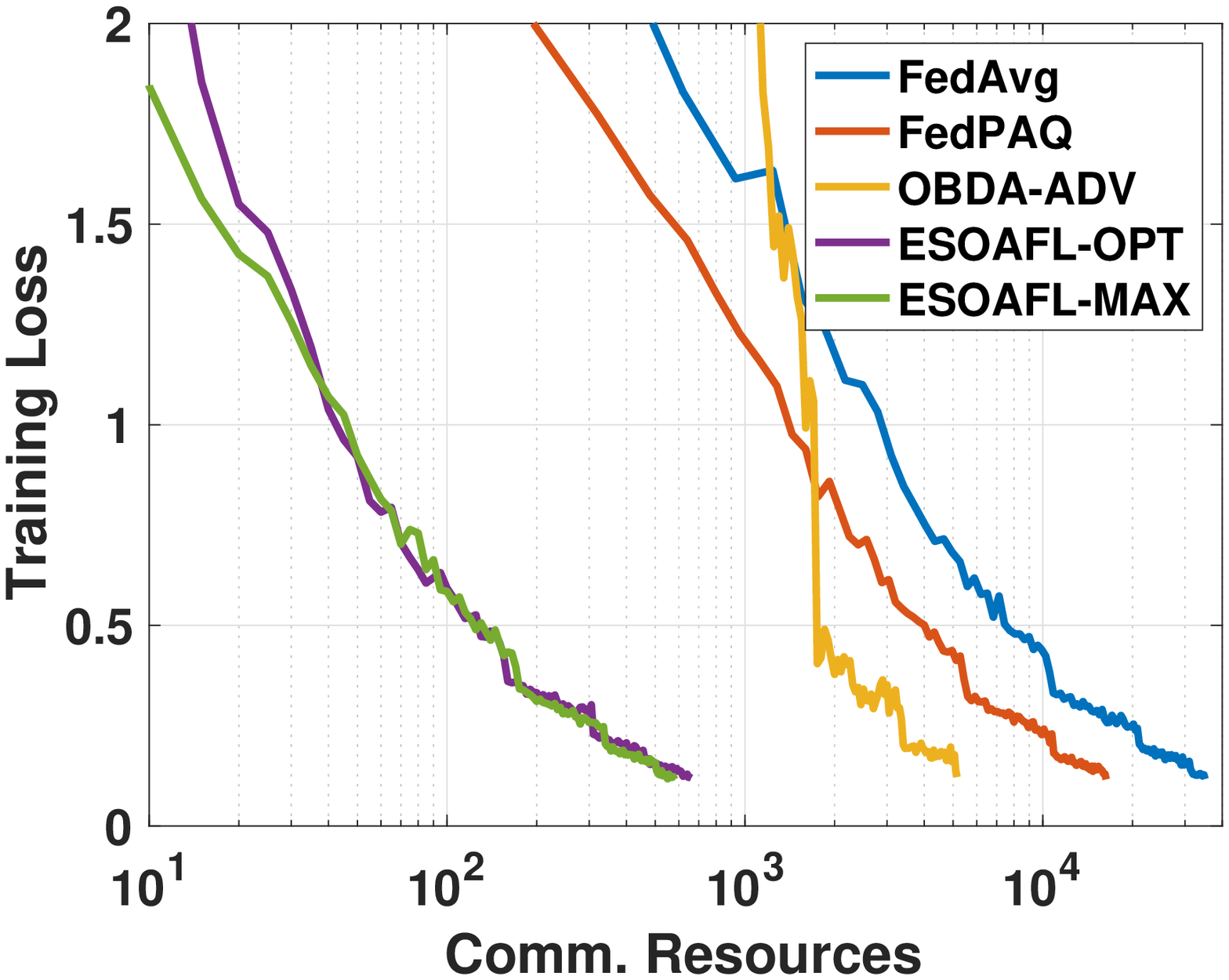}}
  \subfigure[Training acc. vs. energy cons.\label{Fig:exp4}]
  {\includegraphics[width=.46\textwidth]{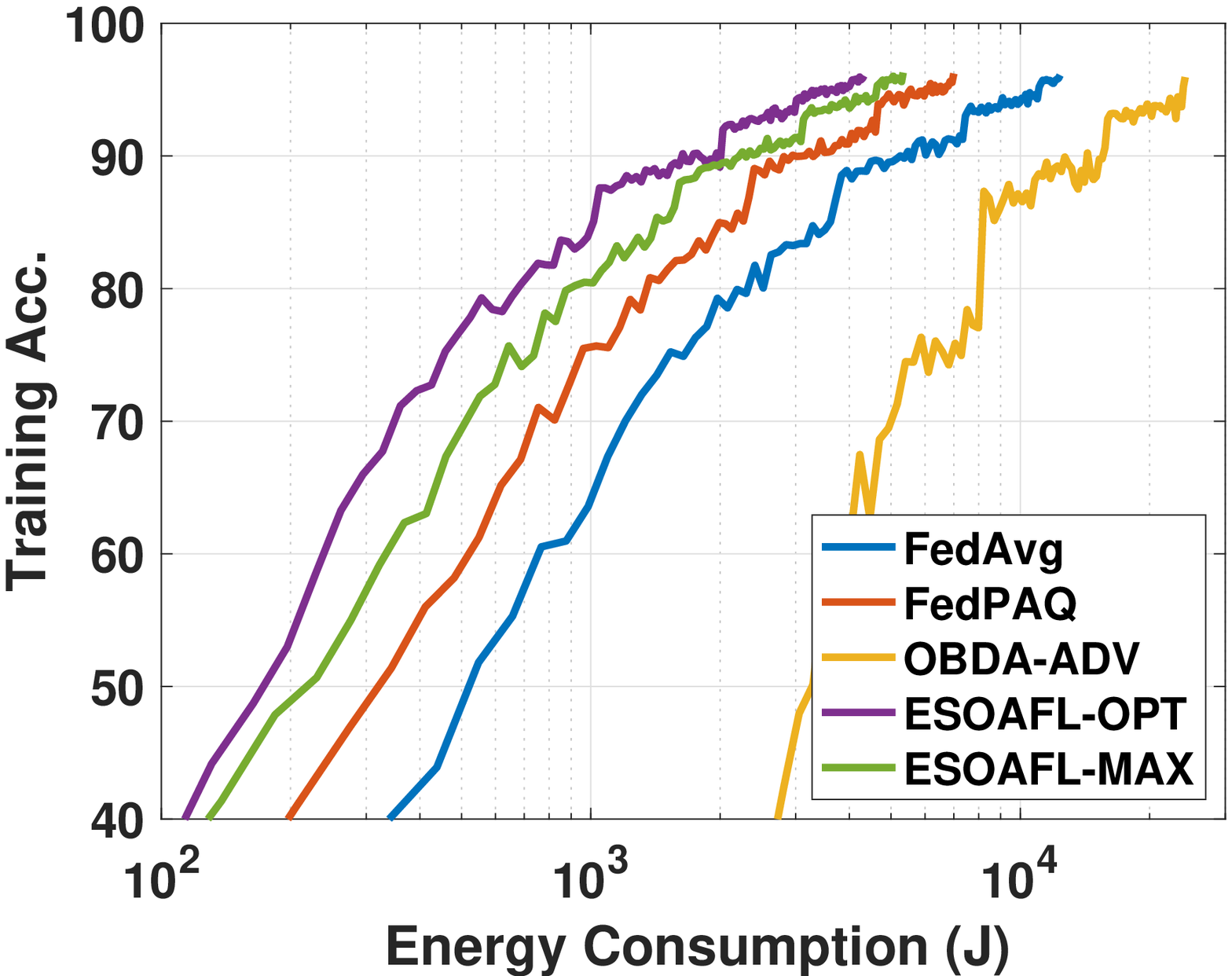}}
  \caption{Training performance of ResNet-20 on CIFAR-10 dataset.} \label{Fig:exp22}
\end{figure*}

\begin{table}
\centering
\caption{Performance comparison under different learning settings (ResNet20 on CIFAR-10)}
\label{tab:final}
\begin{tabular}{|c|c|c;{1pt/1pt}c;{1pt/1pt}c;{1pt/1pt}c;{1pt/1pt}c;{1pt/1pt}c|} 
\hline
\multicolumn{2}{|c|}{\multirow{2}{*}{}}                                                                                                                                                                                         & \multicolumn{3}{c|}{K=10, B=128, H=10}                                                                                                        & \multicolumn{3}{c|}{K=10, B=128, H=10}                                                                                                                                                               \\ 
\cline{3-8}
\multicolumn{2}{|c|}{}                                                                                                                                                                                                          & \multicolumn{1}{c|}{\textbf{\textbf{FedAvg}}} & \multicolumn{1}{c|}{\textbf{\textbf{FedPAQ}}} & \multicolumn{1}{c|}{\textbf{\textbf{ESOAFL}}} & \multicolumn{1}{c|}{\textbf{\textbf{\textbf{\textbf{FedAvg}}}}} & \multicolumn{1}{c|}{\textbf{\textbf{\textbf{\textbf{FedPAQ}}}}} & \multicolumn{1}{c:}{\textbf{\textbf{\textbf{\textbf{ESOAFL}}}}}  \\ 
\hline
\multirow{3}{*}{\textbf{\textbf{IID}}}                                                                                                                        & \textbf{\textbf{Comm.}}                                         & 35030                                         & 16320                                         & \textbf{\textbf{656}}                         & 350300~                                                         & 187200                                                          & \textbf{\textbf{600}}                                            \\ 
\cline{2-2}\cdashline{3-8}[1pt/1pt]
                                                                                                                                                              & \textbf{\textbf{Energy}}                                        & 12379                                         & 7011                                          & \textbf{\textbf{\textbf{\textbf{4323}}}}      & 9977                                                            & 5544                                                            & \textbf{\textbf{1530}}                                           \\ 
\cline{2-2}\cdashline{3-8}[1pt/1pt]
                                                                                                                                                              & \textbf{\textbf{Acc.}}                                          & $88.1\%$                                      & $88.1\%$                                      & $87.4\%$                                      & $87.7\%$                                                        & $87.3\%$                                                        & $87.1\%$                                                         \\ 
\hline
\multirow{3}{*}{\begin{tabular}[c]{@{}c@{}}\textbf{\textbf{Non-IID:~}}\\\textbf{\textbf{$\varsigma = 0.3$}}\end{tabular}}                                     & \textbf{\textbf{\textbf{\textbf{Comm.}}}}                       & 37820                                         & \multicolumn{1}{c:}{17632}                    & \textbf{\textbf{674}}                         & 418500                                                          & 214400                                                          & \textbf{\textbf{675}}                                            \\ 
\cline{2-2}\cdashline{3-8}[1pt/1pt]
                                                                                                                                                              & \textbf{\textbf{\textbf{\textbf{Energy}}}}                      & 13365                                         & 7550                                          & \textbf{\textbf{4590}}                        & 11920                                                           & 6349                                                            & \textbf{\textbf{1721}}                                           \\ 
\cline{2-2}\cdashline{3-8}[1pt/1pt]
                                                                                                                                                              & \textbf{\textbf{\textbf{\textbf{Acc.}}}}                        & $87.9\%$                                      & $87.4\%$                                      & $87.1\%$                                      & $86.7\%$                                                        & $86.6\%$                                                        & $86.4\%$                                                         \\ 
\hline
\multirow{3}{*}{\begin{tabular}[c]{@{}c@{}}\textbf{\textbf{\textbf{\textbf{Non-IID:~}}}}\\\textbf{\textbf{\textbf{\textbf{$\varsigma = 0.3$}}}}\end{tabular}} & \multicolumn{1}{l|}{\textbf{\textbf{\textbf{\textbf{Comm.}}}}}  & 42470                                         & 22080                                         & \textbf{\textbf{693}}                         & 461900                                                          & 238400                                                          & \textbf{\textbf{740}}                                            \\ 
\cline{2-2}\cdashline{3-8}[1pt/1pt]
                                                                                                                                                              & \multicolumn{1}{l|}{\textbf{\textbf{\textbf{\textbf{Energy}}}}} & 15008                                         & 9471                                          & \textbf{\textbf{4692}}                        & 13156                                                           & 7060                                                            & \textbf{\textbf{1887}}                                           \\ 
\cline{2-2}\cdashline{3-8}[1pt/1pt]
                                                                                                                                                              & \multicolumn{1}{l|}{\textbf{\textbf{\textbf{\textbf{Acc.}}}}}   & $85.1\%$                                      & $85.1\%$                                      & $84.8\%$                                      & $81.0\%$                                                        & $81.0\%$                                                        & $81.0\%$                                                         \\ 
\hline
\multirow{3}{*}{\begin{tabular}[c]{@{}c@{}}\textbf{\textbf{\textbf{\textbf{Non-IID:~}}}}\\\textbf{\textbf{\textbf{\textbf{$\varsigma = 0.3$}}}}\end{tabular}} & \multicolumn{1}{l|}{\textbf{\textbf{\textbf{\textbf{Comm.}}}}}  & 53010                                         & 27040                                         & \textbf{\textbf{861}}                         & 492900                                                          & 254400                                                          & \textbf{\textbf{785}}                                            \\ 
\cline{2-2}\cdashline{3-8}[1pt/1pt]
                                                                                                                                                              & \multicolumn{1}{l|}{\textbf{\textbf{\textbf{\textbf{Energy}}}}} & 18951                                         & 11600                                         & \textbf{\textbf{5848}}                        & 14039                                                           & 7535                                                            & \textbf{\textbf{785}}                                            \\ 
\cline{2-2}\cdashline{3-8}[1pt/1pt]
                                                                                                                                                              & \multicolumn{1}{l|}{\textbf{\textbf{\textbf{\textbf{Acc.}}}}}   & $68.4\%$                                      & $68.6\%$                                      & $68.1\%$                                      & $63.1\%$                                                        & $58.9\%$                                                        & $53.2\%$                                                         \\
\hline
\end{tabular}
\end{table}

Fig.~\ref{Fig:exp3} and \ref{Fig:exp4} demonstrate the performance comparison of all schemes with ResNet-20 model on CIFAR-10 dataset. We set the target training loss $\epsilon$ as $0.12$, and obtain the optimal control strategies $H = 11$ and $p_b = 0.51$. Like the conclusions described above, the proposed ESOAFL approach dramatically improves the spectrum efficiency and reduces the required energy. In this situation, our proposed ESOAFL-OPT saves hundreds of times of communication resources compared with FedAvg and FedPAQ. It also saves more than $8\times$ of communication resources compared with the OBDA method. Accordingly, our proposed ESOAFL-OPT scheme saves nearly one-third and two-thirds of energy consumption than FedPAQ and FedAvg schemes. Furthermore, the OBDA-ADV approach has relatively poor convergence performance compared with other approaches due to the high precision requirement of the complex ResNet-20 model. 


We further show the scalability of the ESOAFL scheme with more learning settings. Here, we consider different data distributions in the content of different levels of non-IID data. Let $\varsigma \in [0,1]$ denotes the non-IID level~\cite{wang2020infocom}. For example, $\varsigma = 0.3$ indicates that $30\%$ of the data belong to one label and the remaining $70\%$ data belong to others. We ignore the OBDA-ADV scheme since its performance is not good in non-IID data settings. From Table.~\ref{tab:final}, we can observe that training with non-IID data incurs a larger energy and communication resources consumption to converge. Moreover, compared with FedAvg and FedPAQ, the proposed ESOAFL achieves the indistinguishable final testing accuracy at all non-IID levels while saving communication resources and overall energy consumption. We also conducted the simulations with $K=100$ participants, obtaining similar observations. Note that compared with $K=10$ participants settings, we put less computing loads ($B=32$, $H=5$) in each communication round of the $K=100$ setting, thus causing more communication loads. Therefore, the communication resources consumption of FedAvg and FedPAQ at $K=100$ increases significantly compared with the scenario $K=10$. However, because of the concurrent property of the AirComp, communication resources consumption at $K=100$ in the ESOAFL scheme remains constant compared with the scenario of $K=10$, which indicates the potential of involving large mounts of participants in the ESOAFL scheme.


\section{Related Works}

Recently, much attention has been paid to the energy-efficient FL over mobile devices, where several advanced techniques are utilized to save energy during the FL training~\cite{shi2021towards}. On the one hand, gradient sparsification~\cite{stich2018sparsified, alistarh2018convergence} and gradient quantization~\cite{alistarh2017qsgd,tang2018communication} techniques can compress model updates in the transmission process, significantly reducing the communication burdens~\cite{Li2021infocom}. On the other hand, some researchers consider applying a weight quantization scheme to reduce the required computing energy~\cite{fu2020don}. Although these methods can effectively reduce the energy cost, they are mainly considered from the perspective of learning algorithms and widely ignore the communication components, especially with the physical layer aspects of communication. Realizing the above problem, some pioneering works exploit the waveform superposition property of the wireless medium and propose the AirComp FL~\cite{Zhu2020twcbaa}. Cao et al. in~\cite{cao2020twcopcota}, and Amiri and Gündüz in~\cite{Gunduz2020twcfloefc} apply AirComp to solve the communication bottleneck when a large number of participants aggregate the data together, where power allocation schemes are derived to satisfy the mean square error (MSE) requirements. Additionally, the works in~\cite{yang2020twcflota,fan2021joint} propose joint device selection and communication scheme design methods to improve the learning performance for AirComp FL. All these works take analogy modulation schemes for wireless transmission, which are difficult to be implemented on commercial devices. In addition, the convergence analysis for the whole FL training procedure is not discussed in these works. Noticing the limits above, Zhu et al.~\cite{Zhu2020twconebit} applies the 1-bit digital modulation and derives the convergence analysis accordingly. However, 1-bit based scheme tremendously scarifies precision without considering the energy consumption during the training process. Different from the existing approaches, our design targets the general digital modulation scheme with multiple bits, where the convergence-guaranteed FL approach integrates both the AirComp and the gradient quantization techniques. Moreover, the energy consumption issue, including both computing and communication, is well studied accordingly.   

\section{Conclusion}
In this paper, we proposed the ESOAFL scheme for energy and spectrum efficient FL over mobile devices, where M-AirComp was applied for model updates transmission in a joint compute-and-communicate manner. A high-precision digital modulation scheme with multi-bit gradient quantization was designed for the participating devices to upload their model updates during FL. With the theoretical convergence analysis of the modified FL algorithm, we further developed a joint local computing and transmission probability control approach aiming to minimize the overall energy consumed by all devices. Extensive simulations were conducted to verify our theoretical analysis, and the results showed that the ESOAFL scheme effectively improves the spectrum efficiency with the learning precision guaranteed. Besides, it also saved at least half of energy consumption compared with other FL methods.


\bibliographystyle{IEEEtran}
\bibliography{ref_dian.bib}

\section*{Appendix}
\subsection{Proof of Theorem 1}\label{proof_theorem1}
 
Consider a non-convex FL model setting. Under the L-smoothness assumption of the global objective with the expectation taken, we have 

\begin{align}\label{conv_bound_1}
\begin{split}
    &\mathbb{E}\left[\mathbb{E}_{Q}\left[\mathbb{E}_{\text{Air}}\left[f(\mathbf{w}^{r+1})-f(\mathbf{w}^{r}) \right]\right]\right] \\
\leq& -\theta \eta \mathbb{E}\left[\mathbb{E}_{Q}\left[\mathbb{E}_{\text{Air}}\left[ \left\langle\nabla f^r, \nabla F_{Q}^{r}\right\rangle \right]\right]\right] +\frac{\theta ^2\eta^2L}{2}\mathbb{E}\left[\mathbb{E}_{Q}\left[\mathbb{E}_{\text{Air}}\left[ \|\nabla F^r_Q\|^2 \right]\right]\right],
\end{split}
\end{align}
where we take the expectation over the sampling and operations. Next, the following lemmas are proposed to bound terms in the above inequality.

\begin{lemma}\label{lemma_1}
The inner product between the stochastic gradient $\nabla F_{Q}^{r}$ and full batch gradient $\nabla f^r$ can be bounded as
\begin{flalign}
    \mathbb{E}_{\xi^{(r)}} \mathbb{E}_{Q}\mathbb{E}_{\text{Air}}\left[\left\langle\nabla f^r, \nabla F_{Q}^{r}\right\rangle\right]
    &=\mathbb{E}_{\xi^{(r)}}\left[\left\langle \nabla f^r, \frac{1}{K}\sum_{k=1}^{K}\sum_{h=0}^{H-1}\nabla F_{k}^{r,h}  \right\rangle\right]\nonumber\\
    &\leq \frac{1}{2K}\sum_{k=1}^{K}\sum_{h=0}^{H}\left[-\|\nabla f^r\|^2_2-\|\nabla f^{r,h}_k\|^2_2+L^2\|\mathbf{w}^r-\mathbf{w}^{r,h}_k\|_2^2\right].
\end{flalign}
Here, we set $\nabla F^r_k = \sum_{h=0}^{H-1}\nabla F_k^{r,h}$ and $\nabla F^r_{k,Q} = Q\left(\sum_{h=0}^{H-1} \nabla F_k^{(h, r)}\right)$. We further define $\nabla F^r_Q =\text{Air}_{ \mathcal{K}}\left(Q\left(\sum_{h=0}^{H-1}\nabla F_k^{r,h}\right)\right) $.
\end{lemma}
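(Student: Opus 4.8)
\textbf{Proof plan for Lemma~\ref{lemma_1}.}
The plan is to start from the definition of $\nabla F_Q^r$ and peel off the randomness layer by layer. First I would take the expectation over the M-AirComp operation: since Lemma~1 (M-AirComp bounded variances) gives $\mathbb{E}[\text{Air}(\mathcal{X})]=y=\frac{1}{K}\sum_k x_k$, applying it with $x_k = Q\!\left(\sum_{h=0}^{H-1}\nabla F_k^{r,h}\right)$ collapses $\mathbb{E}_{\text{Air}}[\nabla F_Q^r]$ to $\frac{1}{K}\sum_{k=1}^K Q\!\left(\sum_{h}\nabla F_k^{r,h}\right)$. Next I would take the expectation over the quantizer $Q$; Assumption~3 says $Q$ is unbiased, so $\mathbb{E}_Q\mathbb{E}_{\text{Air}}[\nabla F_Q^r] = \frac{1}{K}\sum_{k=1}^K\sum_{h=0}^{H-1}\nabla F_k^{r,h}$. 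Finally, taking $\mathbb{E}_{\xi^{(r)}}$ and using that $\nabla F_k^{r,h}$ is an unbiased estimate of $\nabla f_k(\mathbf{w}_k^{r,h})$ (the unbiasedness property stated in the FL preliminaries), the inner product becomes $\bigl\langle \nabla f^r,\ \frac{1}{K}\sum_k\sum_h \nabla f_k^{r,h}\bigr\rangle$, which is the first equality claimed.

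For the inequality, the key identity is the polarization / ``$2\langle a,b\rangle = \|a\|^2+\|b\|^2-\|a-b\|^2$'' trick applied termwise. Writing the inner product as $\frac{1}{K}\sum_{k}\sum_{h}\langle \nabla f^r,\ \nabla f_k^{r,h}\rangle$, I would bound each summand using $\langle \nabla f^r,\ \nabla f_k^{r,h}\rangle = \tfrac12\|\nabla f^r\|^2 + \tfrac12\|\nabla f_k^{r,h}\|^2 - \tfrac12\|\nabla f^r - \nabla f_k^{r,h}\|^2$ — but with a sign flip, since the overall term enters the convergence bound~\eqref{conv_bound_1} multiplied by $-\theta\eta$, we actually want a lower bound on the inner product, i.e. an upper bound on $-\langle\cdot,\cdot\rangle$. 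Concretely, $-2\langle \nabla f^r,\nabla f_k^{r,h}\rangle \le -\|\nabla f^r\|^2 - \|\nabla f_k^{r,h}\|^2 + \|\nabla f^r-\nabla f_k^{r,h}\|^2$. Then I would control the cross term $\|\nabla f^r - \nabla f_k^{r,h}\|^2$ by $L$-smoothness (Assumption~1): since $\nabla f^r = \nabla f(\mathbf{w}^r)$ and $\nabla f_k^{r,h}=\nabla f_k(\mathbf{w}_k^{r,h})$, this is $\le L^2\|\mathbf{w}^r - \mathbf{w}_k^{r,h}\|^2$ (strictly this needs the global-vs-local smoothness bookkeeping, which is standard once one assumes $L$-smoothness holds uniformly). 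Summing over $k$ and $h$ and dividing by $2K$ yields exactly the stated bound.

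The main obstacle I anticipate is not any single estimate but the careful handling of the \emph{nested} expectations and the \emph{order} in which randomness is revealed: the quantizer acts on the accumulated stochastic gradient, the M-AirComp noise/dropout is independent of both, and $\xi^{(r)}$ sits underneath everything. One has to invoke the tower property in the right order ($\mathbb{E}_{\text{Air}}$ innermost, then $\mathbb{E}_Q$, then $\mathbb{E}_{\xi^{(r)}}$) so that each unbiasedness statement applies to the correct conditional quantity; getting this wrong would leave stray variance terms. A secondary subtlety is that $\nabla f^r$ (the global full-batch gradient) appears inside the inner product as a deterministic quantity only after conditioning appropriately, and that the index $h$ in the final sum runs to $H$ rather than $H-1$ is a harmless over-counting (adding a nonnegative term), which I would simply note. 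Once the expectation bookkeeping is pinned down, the remaining steps are the routine polarization-plus-smoothness manipulation described above.
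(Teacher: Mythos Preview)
The paper does not actually supply a proof of this lemma; it is stated in Appendix~A and then immediately used in the display~(\ref{detail_proof}) without justification. Your plan is exactly the standard argument one would use to fill in that gap: unbiasedness of $\text{Air}(\cdot)$ (from the M-AirComp lemma) and of $Q(\cdot)$ (Assumption~3) to collapse the two inner expectations, then the polarization identity $2\langle a,b\rangle=\|a\|^2+\|b\|^2-\|a-b\|^2$ together with $L$-smoothness to handle the cross term. Two small remarks: your ``$\le$'' in the polarization step is in fact an equality, with the genuine inequality entering only when you replace $\|\nabla f^r-\nabla f_k^{r,h}\|^2$ by $L^2\|\mathbf{w}^r-\mathbf{w}_k^{r,h}\|^2$; and you are right to flag both the sign convention (the lemma is really a bound on $-\langle\cdot,\cdot\rangle$, consistent with how it is plugged into~(\ref{conv_bound_1})) and the global-vs-local gradient mismatch in applying Assumption~1, which the paper glosses over.
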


\begin{lemma}\label{lemma_2}
  Similar to the Lemma D.3 in~[11], we can bound the distance between the global model and the local model at $r$-th communication round under Assumption $2$ as follows:
\begin{align}
    \mathbb{E}\left[\|\mathbf{w}^r-\mathbf{w}^{r,h}_k\|^2_2\right]\leq \eta^2H\sigma^2 + \eta^2\sum_{h=0}^{H-1}H\|\nabla f_k^{r,h}\|^2_2
\end{align}
\end{lemma}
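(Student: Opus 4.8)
The plan is to write the client drift $\mathbf{w}^r-\mathbf{w}^{r,h}_k$ explicitly as an accumulation of local stochastic gradient steps and then bound its mean-squared norm by separating the mini-batch noise from the deterministic gradient flow.

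First I would unroll the local SGD recursion. Since $\mathbf{w}^{r,0}_k=\mathbf{w}^r$ and $\mathbf{w}^{r,j+1}_k=\mathbf{w}^{r,j}_k-\eta\nabla F_k(\mathbf{w}^{r,j}_k)$, telescoping over $j=0,\dots,h-1$ gives $\mathbf{w}^r-\mathbf{w}^{r,h}_k=\eta\sum_{j=0}^{h-1}\nabla F^{r,j}_k$ with $\nabla F^{r,j}_k:=\nabla F_k(\mathbf{w}^{r,j}_k)$, so that $\mathbb{E}\|\mathbf{w}^r-\mathbf{w}^{r,h}_k\|_2^2=\eta^2\,\mathbb{E}\big\|\sum_{j=0}^{h-1}\nabla F^{r,j}_k\big\|_2^2$. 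The whole task is therefore to bound the expected squared norm of this partial sum.

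Next I would decompose $\nabla F^{r,j}_k=\nabla f^{r,j}_k+\boldsymbol{\zeta}^{r,j}_k$, where $\nabla f^{r,j}_k:=\nabla f_k(\mathbf{w}^{r,j}_k)$ and, conditioned on the filtration $\mathcal{F}^{r,j}$ generated by the mini-batch samples of iterations $0,\dots,j-1$, the term $\boldsymbol{\zeta}^{r,j}_k$ is zero-mean with $\mathbb{E}\|\boldsymbol{\zeta}^{r,j}_k\|^2\le\sigma^2$ by Assumption~\ref{bound}. For the deterministic part, Jensen's inequality (Cauchy--Schwarz over $h$ summands) gives $\big\|\sum_{j=0}^{h-1}\nabla f^{r,j}_k\big\|^2\le h\sum_{j=0}^{h-1}\|\nabla f^{r,j}_k\|^2\le H\sum_{h=0}^{H-1}\|\nabla f^{r,h}_k\|^2$. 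For the noise part, the increments $\{\boldsymbol{\zeta}^{r,j}_k\}_j$ form a martingale-difference sequence, so by the tower property their pairwise cross terms vanish and $\mathbb{E}\big\|\sum_{j=0}^{h-1}\boldsymbol{\zeta}^{r,j}_k\big\|^2=\sum_{j=0}^{h-1}\mathbb{E}\|\boldsymbol{\zeta}^{r,j}_k\|^2\le h\sigma^2\le H\sigma^2$. Combining the two contributions and multiplying by $\eta^2$ then yields $\mathbb{E}\|\mathbf{w}^r-\mathbf{w}^{r,h}_k\|_2^2\le\eta^2 H\sigma^2+\eta^2\sum_{h=0}^{H-1}H\|\nabla f^{r,h}_k\|_2^2$, exactly the claimed bound and the same structure as Lemma~D.3 of the cited reference specialized to $H$ local steps.

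The main obstacle is not any hard estimate but the bookkeeping of the dependence structure: since $\mathbf{w}^{r,j}_k$, and hence $\nabla f^{r,j}_k$, is itself a function of the earlier noises $\boldsymbol{\zeta}^{r,0}_k,\dots,\boldsymbol{\zeta}^{r,j-1}_k$, one cannot naively expand $\big\|\sum_j\boldsymbol{\zeta}^{r,j}_k+\sum_i\nabla f^{r,i}_k\big\|^2$ and declare the mixed term zero term-by-term. The correct route is to peel the sum off one iteration at a time (or invoke the standard martingale identity) so that each new noise increment is paired only with quantities measurable with respect to its own past; once this conditioning is set up, the remaining steps are routine applications of Assumption~\ref{bound} and convexity of $\|\cdot\|_2^2$.
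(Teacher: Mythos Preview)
Your proposal is correct and follows exactly the standard route that the paper defers to: the paper does not actually prove this lemma but simply cites Lemma~D.3 of reference~[11] (Haddadpour et al.), whose argument is precisely the unroll--decompose--bound pattern you describe. Your identification of the only real subtlety (the cross term between the martingale noise and the predictable drift, since $\nabla f_k^{r,i}$ for $i>j$ depends on $\boldsymbol{\zeta}^{r,j}_k$) and your proposed fix via iterated conditioning/peeling are exactly what the cited reference does, so there is nothing to add.
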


\begin{lemma}\label{lemma_3}
The last term in~(\ref{conv_bound_1}) can be calculated as
\begin{align}
\begin{split}
&\mathbb{E}_{\xi^{(r)}} \mathbb{E}_{Q}\mathbb{E}_{\text{Air}}\left[\|\text{Air}_{ \mathcal{K}}\left(Q\left(\sum_{h=0}^{H-1}\nabla F_k^{r,h}\right)\right)\|^2\right] \leq \frac{\sigma_z^2}{K^2{p_b}^2}\\
& +  \sum_{k=1}^{K}\frac{q+{p_b}}{K^2{p_b}}\operatorname{Var}(\nabla F^r_{k})+ \sum_{k=1}^{K}\frac{q(2-{p_b}) +K{p_b}}{K^2{p_b}}\|\nabla f^r_{k}\|^2
\end{split}
\end{align}
\end{lemma}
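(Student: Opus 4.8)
The plan is to bound the second moment of the aggregated signal $\nabla F^r_Q=\text{Air}_{\mathcal{K}}(\{Q(\nabla F^r_k)\}_{k=1}^{K})$ by peeling off the three mutually independent sources of randomness in the order in which they enter: the channel-and-transmission randomness of the M-AirComp operator (innermost), the quantizer $Q$, and the mini-batch sampling $\xi^{(r)}$ (outermost). The workhorse throughout is the identity $\mathbb{E}\|X\|^2=\|\mathbb{E}[X]\|^2+\operatorname{Var}(X)$, applied one layer at a time together with the law of total variance.

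First I would condition on the quantized inputs and apply the M-AirComp bounded-variance result with input set $\mathcal{X}=\{Q(\nabla F^r_k)\}$. This gives $\mathbb{E}_{\text{Air}}[\text{Air}_{\mathcal{K}}(\mathcal{X})]=\frac{1}{K}\sum_k Q(\nabla F^r_k)$ and the variance $\frac{1}{K^2}(\frac{1}{p_b}-1)\sum_k\|Q(\nabla F^r_k)\|^2+\frac{\sigma_z^2}{K^2 p_b^2}$, so that
\[
\mathbb{E}_{\text{Air}}\|\nabla F^r_Q\|^2=\frac{1}{K^2}\Big\|\sum_k Q(\nabla F^r_k)\Big\|^2+\frac{1}{K^2}\Big(\frac{1}{p_b}-1\Big)\sum_k\|Q(\nabla F^r_k)\|^2+\frac{\sigma_z^2}{K^2 p_b^2}.
\]
The channel-noise term is already in final form and can be carried through untouched.

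Next I would take $\mathbb{E}_Q$ using the quantization assumption in its three consequences: unbiasedness $\mathbb{E}_Q[Q(x)]=x$, second moment $\mathbb{E}_Q\|Q(x)\|^2=(1+q)\|x\|^2$, and independence of the quantizers across users. The per-user terms become $(1+q)\|\nabla F^r_k\|^2$, while the squared-mean term splits as $\mathbb{E}_Q\|\sum_k Q(\nabla F^r_k)\|^2=\|\sum_k\nabla F^r_k\|^2+q\sum_k\|\nabla F^r_k\|^2$, the cross terms vanishing by unbiasedness and independence. I would then take $\mathbb{E}_{\xi^{(r)}}$, again using cross-user independence, through $\mathbb{E}_\xi\|\nabla F^r_k\|^2=\operatorname{Var}(\nabla F^r_k)+\|\nabla f^r_k\|^2$ and $\mathbb{E}_\xi\|\sum_k\nabla F^r_k\|^2=\|\sum_k\nabla f^r_k\|^2+\sum_k\operatorname{Var}(\nabla F^r_k)$. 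The leftover aggregate $\|\sum_k\nabla f^r_k\|^2$ is bounded by $K\sum_k\|\nabla f^r_k\|^2$ via Cauchy--Schwarz, which is the source of the $Kp_b$ contribution to the coefficient of $\|\nabla f^r_k\|^2$.

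The main obstacle --- and the only non-mechanical step --- is the coefficient bookkeeping in the final collection. After the three expectations, four contributions must be tracked (the AirComp transmission variance scaled by $\frac{1}{p_b}-1$, the quantization variance scaled by $q$, the sampling variance, and the Cauchy--Schwarz-bounded aggregate mean) and regrouped so that the $\operatorname{Var}(\nabla F^r_k)$ terms combine into $\frac{q+p_b}{K^2 p_b}$ and the $\|\nabla f^r_k\|^2$ terms into $\frac{q(2-p_b)+Kp_b}{K^2 p_b}$. The delicate point is to keep the quantization-induced $q$ factors separate from the transmission-induced $\frac{1}{p_b}$ factors until the very end, rather than merging them prematurely, so that the quantization noise is neither over- nor under-counted when it passes through the $(\frac{1}{p_b}-1)$ amplification.
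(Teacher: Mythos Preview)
Your proposal is correct and follows essentially the same route as the paper: both arguments first expand the second moment via the M-AirComp variance formula (Lemma~1), then apply the quantization variance bound from Assumption~3 together with independence across users to handle $\mathbb{E}_Q$, then use $\mathbb{E}\|X\|^2=\operatorname{Var}(X)+\|\mathbb{E}X\|^2$ for $\mathbb{E}_{\xi^{(r)}}$, and finish by bounding $\|\sum_k\nabla f^r_k\|^2\le K\sum_k\|\nabla f^r_k\|^2$. The only difference is cosmetic: the paper carries the quantization-variance pieces $\mathbb{E}_Q\|\nabla F^r_{k,Q}-\nabla F^r_k\|^2$ explicitly and combines them before invoking $q\|\nabla F^r_k\|^2$, whereas you substitute $(1+q)\|\nabla F^r_k\|^2$ immediately; either way the bookkeeping you flag as the delicate step is the same.
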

\begin{proof}
 
\begin{align}
\begin{split}
&\mathbb{E}_{\xi^{(r)}} \mathbb{E}_{Q}\mathbb{E}_{Air}\left[\|Air_{ \mathcal{K}}\left(Q\left(\sum_{h=0}^{H-1}\nabla F_k^{r,h}\right)\right)\|^2\right]\\
=&\mathbb{E}_{\xi^{(r)},Q}\left[  \frac{1}{K^2}\left(\|\sum_{k=1}^{K}\nabla F^r_{k,Q}\|^2 + (\frac{1}{{p_b}}-1)\sum_{k=1}^{K}\|\nabla F^r_{k,Q}\|^2 \right) + \frac{\sigma_z^2}{K^2{p_b}^2} \right]\\
=&\mathbb{E}_{\xi^{(r)}}\left[\mathbb{E}_{Q}\left[\|\frac{1}{K}\sum_{k=1}^{K}\nabla F^r_{k,Q}\|^2 + \frac{1}{K^2}(\frac{1}{{p_b}}-1)\sum_{k=1}^{K}\|\nabla F^r_{k,Q}\|^2 \right]\right] + \frac{\sigma_z^2}{K^2{p_b}^2} \\
=&\mathbb{E}_{\xi^{(r)}}\left[\mathbb{E}_{Q}\left[\frac{1}{K^2}\sum_{k=1}^{K}\left[\|\nabla F^r_{k,Q} - \nabla F^r_{k}\|^2\right]\right] + \|\frac{1}{K}\sum_{k=1}^{K}\nabla F^r_{k}\|^2 \right]+ \\
&\mathbb{E}_{\xi^{(r)}}\left[\frac{1}{K^2}(\frac{1}{{p_b}}-1)\sum_{k=1}^{K} (\mathbb{E}_{Q}\left[\|\nabla F^r_{k,Q} - \nabla F^r_{k}\|^2\right] + \|\nabla F^r_{k}\|^2\right] + \frac{\sigma_z^2}{K^2{p_b}^2} \\
=&\mathbb{E}_{\xi^{(r)}}\left[\mathbb{E}_{Q}\left[\frac{1}{K^2{p_b}}\sum_{k=1}^{K}\left[\|\nabla F^r_{k,Q} - \nabla F^r_{k}\|^2\right]\right] +   \|\frac{1}{K}\sum_{k=1}^{K}\nabla F^r_{k}\|^2 + \frac{1}{K}(\frac{1}{{p_b}}-1)\sum_{k=1}^{K}\|\nabla F^r_{k}\|^2 \right]+ \frac{\sigma_z^2}{K^2{p_b}^2}\\
\leq & \mathbb{E}_{\xi^{(r)}}\left[\sum_{k=1}^{K}\frac{q}{K^2p}\|\nabla F^r_{k}\|^2 +   \|\frac{1}{K}\sum_{k=1}^{K}\nabla F^r_{k}\|^2 + \frac{1}{K}(\frac{1}{{p_b}}-1)\sum_{k=1}^{K}\| \nabla F^r_{k}\|^2 \right] + \frac{\sigma_z^2}{K^2{p_b}^2}\\
=&\sum_{k=1}^{K}\frac{q}{K^2{p_b}}\left[\operatorname{Var}(\nabla F^r_{k})+ \|\nabla f^r_{k}\|^2\right]+ \left[\frac{1}{K^2}\sum_{k=1}^{K}\operatorname{Var}(\nabla F^r_{k}) + \|\frac{1}{K}\sum_{k=1}^{K}\nabla f^r_{k}\|^2\right]+\\
&\frac{1}{K}(\frac{1}{{p_b}}-1)\sum_{k=1}^{K}\| \nabla f^r_{k}\|^2 + \frac{\sigma_z^2}{K^2{p_b}^2}\\
\leq& \sum_{k=1}^{K}\frac{q+{p_b}}{K^2{p_b}}\operatorname{Var}(\nabla F^r_{k})+ \sum_{k=1}^{K}\frac{q(2-{p_b}) +K{p_b}}{K^2{p_b}}\|\nabla f^r_{k}\|^2+ \frac{\sigma_z^2}{K^2{p_b}^2}
\end{split}
\end{align}
\end{proof}
 According to Assumption 2, we have $\operatorname{Var}(\nabla F^r_{k}) \leq H\sigma^2$. We further have $\| \nabla f^r_{k}\|^2=\|\sum_{h=0}^{H-1}\nabla f^{r,h}_{k}\|^2\leq H\sum_{h=0}^{H-1}\|\nabla f^{r,h}_{k}\|^2$. Therefore, we have
\begin{flalign}
    &\mathbb{E}_{\xi^{(r)}} \mathbb{E}_{Q}\mathbb{E}_{\text{Air}}\left[\|\text{Air}_{ \mathcal{K}}\left(Q\left(\sum_{h=0}^{H-1}\nabla F_k^{r,h}\right)\right)\|^2\right]  \\
    \leq &\frac{q+{p_b}}{K{p_b}}H\sigma^2+H\frac{q(2-{p_b}) +K{p_b}}{K^2{p_b}} \sum_{k=1}^{K} \sum_{h=0}^{H}\|\nabla f^{r,h}_{k}\|^2+ \frac{\sigma_z^2}{K^2{p_b}^2} \nonumber
\end{flalign}

Therefore, by integrating Lemma~\ref{lemma_1},~\ref{lemma_2}, and~\ref{lemma_3} into~(\ref{conv_bound_1}), we will have:

\begin{flalign}\label{detail_proof}
&\mathbb{E}\left[\mathbb{E}_{Q}\left[\mathbb{E}_{\text{Air}}\left[f(\mathbf{w}^{r+1})-f(\mathbf{w}^{r}) \right]\right]\right]\nonumber \\
\leq & \frac{\eta\theta }{2K}\sum_{k=1}^{K}\sum_{h=0}^{H}\left[-\|\nabla f^r\|^2_2-\|\nabla f^{r,h}_k\|^2_2+L^2\eta^2 H\left[\sigma^2 + H \|\nabla f_k^{r,h}\|^2_2\right]    \right] + \\
& \frac{(q+{p_b})\theta ^2\eta^2L}{2K{p_b}}H\sigma^2+ HL\theta ^2\eta^2\frac{q(2-{p_b}) +K{p_b}}{2K^2{p_b}} \sum_{k=1}^{K} \sum_{h=0}^{H}\|\nabla f^{r,h}_{k}\|^2+ \frac{\theta ^2\eta^2\sigma_z^2L}{2K^2{p_b}^2}\nonumber\\
&=-\frac{\eta\theta  H}{2}\|\nabla f^r\|^2_2 -  \frac{\eta\theta }{2K}(1-L^2\eta^2H^2-  HL\theta \eta\frac{q(2-{p_b}) +K{p_b}}{K{p_b}}) \sum_{k=1}^{K}\sum_{h=0}^{H}\|\nabla f^{r,h}_k\|^2_2\nonumber\\
&+ \frac{\theta \eta^2LH}{2K}(\eta LHK+\frac{({p_b}+q)\theta }{{p_b}})\sigma^2+ \frac{\theta ^2\eta^2\sigma_z^2L}{2K^2{p_b}^2}\nonumber
\end{flalign}

If we set $1-L^2\eta^2H^2- HL\theta \eta\frac{q(2-{p_b}) +K{p_b}}{K{p_b}} \geq 0$, we can get
\begin{flalign}
    \mathbb{E}&\left[\mathbb{E}_{Q}\left[\mathbb{E}_{\text{Air}}\left[f(\mathbf{w}^{r+1})-f(\mathbf{w}^{r}) \right]\right]\right] \leq -\frac{\eta\theta  H}{2}\|\nabla f^r\|^2_2 \nonumber\\
    +& \frac{\theta \eta^2LH}{2K}(\eta LHK+\frac{({p_b}+q)\theta }{{p_b}})\sigma^2+ \frac{\theta ^2\eta^2\sigma_z^2L}{2K^2{p_b}^2}
\end{flalign}

Next, we sum up the above equation over all $R$ communication rounds and get

\begin{flalign}
    &\frac{1}{R}\sum_{r=0}^{R-1}\|\nabla f^r\|^2_2 \leq \frac{2 (f(\mathbf{w}^{0})-f(\mathbf{w}^{*}))}{\eta\theta  HR}  \\
    &+\frac{\eta L}{K}(\eta LHK+\frac{({p_b}+q)\theta }{{p_b}})\sigma^2+ \frac{\theta \eta L}{HK^2{p_b}^2}\sigma_z^2\nonumber\\
    &=\frac{2 (f(\mathbf{w}^{0})-f(\mathbf{w}^{*}))}{\eta\theta  HR} +\frac{\eta\theta  L}{K}\frac{({p_b}+q)}{{p_b}}\sigma^2+ \eta^2L^2H\sigma^2 + \frac{\theta \eta L\sigma_z^2}{HK^2{p_b}^2}\nonumber
\end{flalign}

\subsection{Proof of the convexity of $\Theta _1$ and $\Theta _2$}\label{hessian_proof}

The second-order partial derivative of functions $\Theta _1$ and $\Theta _2$ can be calculated as:

\begin{align}
\begin{split}
    \frac{\partial \Theta _1}{\partial {p_b}} &= -\frac{A_0q}{{p_b}^2H}-\frac{B_0 q}{2 {p_b}^{\frac{3}{2}} H^{\frac{1}{2}}({p_b}+q)^{\frac{1}{2}}}\\
    \frac{\partial^2 \Theta _1}{\partial {p_b}^2} &= \frac{2 q\left(A_0 \sqrt{H}\left(q \sqrt{{p_b}}+{p_b}^{3 / 2}\right) \sqrt{{p_b}+q}+\frac{1}{2} H {p_b} B_0\left({p_b}+\frac{3}{4} q\right)\right)}{H^{3 / 2} {p_b}^{7 / 2}({p_b}+q)^{3 / 2}}\\
    \frac{\partial^2 \Theta _1}{\partial {p_b} \partial H} &= \frac{q\left(A_0 \sqrt{{p_b}} \sqrt{H} \sqrt{{p_b}+q}+\frac{1}{4} B_0 {p_b} H\right)}{H^{5 / 2} \sqrt{{p_b}+q} {p_b}^{5 / 2}}\\
    \frac{\partial \Theta _1}{\partial H} &= -\frac{A_0({p_b}+q)}{{p_b} H^{2}}-\frac{1}{2} \frac{B_0 \sqrt{{p_b}+q} {p_b}}{({p_b} H)^{3 / 2}}\\
    \frac{\partial^2 \Theta _1}{\partial H^2} &= \frac{2\left(A_0({p_b}+q) \sqrt{{p_b} H}+\frac{3}{8} B_0 \sqrt{{p_b}+q} {p_b} H\right)}{\sqrt{{p_b} H} {p_b} H^{3}}\\
    \frac{\partial^2 \Theta _1}{\partial H \partial {p_b}} &= \frac{1}{4} \frac{q(4 A_0 \sqrt{{p_b}+q \sqrt{{p_b}} H+B_0 {p_b} H)}}{{p_b}^{2} H^{2} \sqrt{{p_b}+q} \sqrt{{p_b} H}}\\
    \frac{\partial^2 \Theta _2}{\partial {p_b}^2} &=\lambda \frac{\ln^2{p_b} - 3 \ln {p_b}+1}{{p_b}\ln^2 {p_b}(\ln {p_b}-1)^2}T^{comm}\\
    \frac{\partial^2 \Theta _2}{\partial H^2} &= \frac{\partial^2 \Theta _2}{\partial H\partial {p_b}} = \frac{\partial^2 \Theta _2}{\partial {p_b} \partial H} = 0
\end{split}
\end{align}

We further have $\frac{\partial^2 \Theta _1}{\partial {p_b}^2} \geq 0$ and $\frac{\partial^2 \Theta _1}{\partial {p_b}^2} \times \frac{\partial^2 \Theta _1}{\partial H^2} - \frac{\partial^2 \Theta _1}{\partial {p_b} \partial H} \times \frac{\partial^2 \Theta _1}{\partial H \partial {p_b}} \geq 0$. Thus, both function $\Theta _1(\boldsymbol{\phi})$ and $\Theta _2(\boldsymbol{\phi})$ are positive and convex.

The Hessian matrix of the function $\Theta _1$ can be described as $\begin{bmatrix}
\frac{\partial^2 \Theta _1}{\partial {p_b}^2} &\frac{\partial^2 \Theta _1}{\partial {p_b} \partial H} \\ 
 \frac{\partial^2 \Theta _1}{\partial H \partial {p_b}}&\frac{\partial^2 \Theta _1}{\partial H^2} 
\end{bmatrix}$, where we can find that both $\frac{\partial^2 \Theta _1}{\partial {p_b}^2}$ and $\begin{vmatrix}
\frac{\partial^2 \Theta _1}{\partial {p_b}^2} &\frac{\partial^2 \Theta _1}{\partial {p_b} \partial H} \\ 
 \frac{\partial^2 \Theta _1}{\partial H \partial {p_b}}&\frac{\partial^2 \Theta _1}{\partial H^2} 
\end{vmatrix}$ are positive. In addition, the Hessian matrix of the function $\Theta _1$ is positive-definite, and we can also easily obtain that the Hessian matrix of the function $\Theta _2$ is positive semi-definite. Therefore, both function $\Theta _1(\boldsymbol{\phi})$ and $\Theta _2(\boldsymbol{\phi})$ are positive and convex.


\vfill

\end{document}